\documentclass{article}
\usepackage{iclr2027_conference,times}
\usepackage[T1]{fontenc}
\usepackage{amsmath,amssymb,amsthm,mathtools}
\usepackage{graphicx,booktabs,multirow,array}
\usepackage{xcolor,tikz}
\usetikzlibrary{arrows.meta,positioning,calc,fit}
\usepackage{algorithm,algpseudocode}
\usepackage[section]{placeins}
\usepackage{titletoc}
\usepackage{hyperref,url}
\usepackage{enumitem}
\hypersetup{hidelinks}

\newtheorem{theorem}{Theorem}[section]
\newtheorem{lemma}[theorem]{Lemma}
\newtheorem{proposition}[theorem]{Proposition}

\newtheorem{assumption}[theorem]{Assumption}
\theoremstyle{remark}

\title{HO-FL: Hybrid-Order Federated Learning\\for Heterogeneous Edge Devices}
\author{%
\textbf{Qiyuan Chen\textsuperscript{1,}\thanks{Equal contribution.}\,, Xian Wu\textsuperscript{1,*}, Yanan Ma\textsuperscript{2}, Xianhao Chen\textsuperscript{1,}\thanks{Corresponding author: Xianhao Chen (\texttt{xcheneee@hku.hk}).}}\\
\normalfont\textsuperscript{1}University of Hong Kong (hku.hk)\\
\normalfont\textsuperscript{2}City University of Hong Kong (cityu.edu.hk)\\
\normalfont\texttt{qiyuanchen@connect.hku.hk, u3012772@connect.hku.hk}\\
\normalfont\texttt{yananma8-c@my.cityu.edu.hk, xcheneee@hku.hk}%
}
\begin{document}
\iclrfinalcopy
\maketitle
\begin{abstract}
Federated learning (FL) on memory-constrained edge devices faces a dilemma: first-order (FO) optimization (i.e., backpropagation) demands substantial memory, whereas zeroth-order (ZO) optimization suffers from severe convergence slowdown. To resolve this dilemma, we introduce HO-FL, a hybrid-order FL framework that trains a model's bottom segment with ZO optimization and its top segment with FO optimization. Each device can flexibly select its order boundary according to its memory budget while participating in the training of the same global model. Moreover, our convergence analysis reveals a new, fundamental trade-off: clients with larger FO-trained segments can provide more accurate updates, but favoring them can underrepresent other clients' data. We connect this trade-off to the bias and variance of actual multi-step local updates, yielding a sampling optimization problem and a practical dimension-aware approximation with direct model averaging. Experiments on language tasks examine task performance, client memory, and sampling under data heterogeneity. The results show that hybrid-order local training can retain much of the full-FO performance with substantially lower client memory requirements.
Our code is available at \url{https://github.com/HKU-WILL-Lab/HO-FL}.
\end{abstract}
\section{Introduction}
\label{sec:intro}

Deploying and fine-tuning large language models (LLMs) directly on edge devices provides a promising paradigm for unlocking decentralized real-world data while preserving user data privacy \citep{wu2025survey}.
Federated learning (FL) \citep{fedavg} has been widely explored as an effective framework to collaboratively adapt models across distributed clients without centralizing raw data.
However, standard FL relies on first-order (FO) optimization via backpropagation (BP), which requires storing intermediate activations across model layers.
For modern LLMs, these activations easily exceed the physical memory limits of commodity edge hardware, posing a severe memory bottleneck for on-device federated adaptation.

To mitigate this activation memory overhead, zeroth-order (ZO) optimization has been increasingly adopted in federated edge learning \citep{fedzo,decomfl}.
By estimating gradient directions solely through perturbed forward passes, ZO optimization avoids constructing a backward computation graph, thereby reducing client training memory to the inference level.
Nevertheless, pure ZO gradient optimization incurs a severe variance penalty: random directional perturbations produce gradient estimators whose variance scales proportionally with the trainable parameter dimension $d$.
Consequently, pure ZO federated algorithms typically require orders-of-magnitude more local iterations and communication rounds than their FO counterparts to reach target performance \citep{hosfl,hiso}, limiting their practicality for complex downstream tasks.

Recent efforts on hybrid-order (HO) optimization provide an appealing methodology to overcome this memory--accuracy dilemma.
By partitioning the network architecture, HO optimization structurally decouples the optimization landscape: the bottom layers are optimized via BP-free ZO perturbations, while the top layers are trained using FO gradients.
While early exploratory efforts have leveraged such hybrid updates within split learning paradigms \citep{hosfl,hosl,heron}, directly adapting hybrid-order optimization to practical FL is hindered by heterogeneous edge constraints.
In federated settings, real-world edge devices exhibit diverse physical memory capacities; enforcing a globally identical order boundary either exceeds the memory budget of resource-constrained devices or underutilizes the capacity of more capable ones.

These challenges lead to a natural question: \textit{Can we design a fully federated hybrid-order optimization paradigm that accommodates heterogeneous edge hardware while retaining the convergence advantages of first-order training?}

To answer this question, we propose \textbf{HO-FL}, a hybrid-order FL framework tailored for heterogeneous edge devices.
In HO-FL, each client executes both ZO updates on the bottom segment and FO backpropagation on the top segment entirely on-device, performing multi-step local training and synchronizing with the central server only at the end of each round.
To accommodate hardware disparity, HO-FL enables each client to configure a personalized order boundary matched to its local memory budget.
To alleviate uplink communication overhead, HO-FL further encodes bottom ZO updates into compact scalar-seed pairs, requiring full-vector uploads only for the top FO parameters.

We establish the non-convex convergence of HO-FL, proving that the average gradient norm is bounded by $\mathcal{O}\big(\sqrt{(1 + \frac{1}{q}\sum_{i=1}^N p_i d_i^{\mathrm{ZO}})/T}\big) + \mathcal{O}\big(\zeta^2 N \sum_{i=1}^N (p_i - \frac{1}{N})^2\big)$.
Here, the first term characterizes the optimization rate governed by the sampling-weighted ZO parameter dimension, while the second term quantifies the representation bias induced by client data heterogeneity $\zeta^2$.
Furthermore, our bound provides a unified theoretical guarantee, seamlessly recovering existing hybrid-order and federated convergence rates as special cases when clients participate uniformly ($p_i = 1/N$) or share homogeneous order boundaries.

Crucially, this bound reveals a promising optimization opportunity: the two terms formalize a fundamental trade-off between local update quality and global data representation.
While uniform sampling is unavoidably bottlenecked by noisy updates from ZO-dominated clients, non-uniform selection $p$ can actively suppress gradient variance by prioritizing FO-dominant clients at the cost of a controlled representation penalty.
Grounded in this trade-off, we design an efficient \textbf{Dimension-Aware Client Sampling} policy via dependent rounding, provably and empirically outperforms uniform sampling.
Our main contributions are summarized as follows:
\begin{itemize}[leftmargin=20pt]

    \item \textbf{Framework \& Unified Convergence Theory:} We propose HO-FL, the first fully federated hybrid-order optimization framework supporting client-adaptive order boundaries and compact seed-based uplink transmission. We establish a unified convergence theory for HO-FL, showing that the convergence bound decomposes into an optimization term $\mathcal{O}\big(\sqrt{(1 + \frac{1}{q}\sum_{i=1}^N p_i d_i^{\mathrm{ZO}})/T}\big)$ and a sampling bias term $\mathcal{O}\big(\zeta^2 N \sum_{i=1}^N (p_i - \frac{1}{N})^2\big)$, seamlessly subsuming existing hybrid-order analyses as special cases.
    \item \textbf{Dimension-Aware Client Sampling:} Grounded in our convergence theory, we formulate a dimension-aware client sampling policy via dependent rounding. Our strategy selectively prioritizes low-variance updates while bounding sampling bias, which is demonstrated both theoretically and empirically to accelerate global convergence over uniform client sampling.
    \item \textbf{Empirical Validation:} We evaluate HO-FL across diverse language understanding and generation benchmarks on OPT-125M, Qwen2.5-1.5B, and SmolLM3-3B. The results demonstrate that HO-FL closely approaches full first-order federated performance while slashing client peak memory by up to 77.4\% on the most resource-constrained devices and substantially outperforming pure ZO baselines.
\end{itemize}
\section{Related Work}
\label{sec:related}

\paragraph{Zeroth-order optimization in federated learning.}
Zeroth-order (ZO) optimization avoids activation caching by estimating gradients via randomized finite differences, reducing client memory to inference levels~\citep{mezo,fedzo}.
To curb the communication overhead of exchanging perturbations, DeComFL~\citep{decomfl} and HiSo~\citep{hiso} employ pseudo-random seed synchronization and Hessian guidance, respectively.
Nevertheless, pure ZO federated algorithms inevitably suffer from gradient estimation variance that scales with the full parameter dimension $d$, severely slowing convergence.
HO-FL circumvents this bottleneck by confining ZO optimization strictly to memory-critical bottom segments.

\paragraph{Hybrid-order optimization across distributed paradigms.}
To balance memory footprint and gradient accuracy, recent works partition networks into a bottom ZO segment and a top FO segment. 
Although existing methods realize this hybrid structure differently, they can be cast into a unified framework governed by boundary activations and localized surrogate objectives (see Appendix~\ref{app:unified-framework} for details): HOSL~\citep{hosl} evaluates perturbations through end-to-end forward passes; HO-SFL~\citep{hosfl} freezes cut-layer activation feedback to construct a linear surrogate; and HERON-SFL~\citep{heron} introduces a client-side auxiliary network to decouple perturbations from server dependencies. 
However, these methods are fundamentally tied to split learning (SL) architectures, which require per-iteration activation transmissions and enforce identical order boundaries across all clients. 
In contrast, HO-FL executes both segments fully locally, allows client-adaptive order boundaries to fit their heterogeneous memory limits, and enables dimension-aware client sampling to enhance performance.

\section{HO-FL: Hybrid-Order Federated Optimization}
\label{sec:method}

\paragraph{Problem formulation.}
We consider an FL system with $N$ distributed edge clients, where each client $i \in \{1, \dots, N\}$ holds a local data distribution $\mathcal{D}_i$. Given global model parameters $w \in \mathbb{R}^d$ and a sample loss $\ell(w; x, y)$, the global objective is:
\begin{equation}
  \min_{w \in \mathbb{R}^d} f(w) \coloneqq \frac{1}{N} \sum_{i=1}^N f_i(w), \qquad
  \text{where} \quad f_i(w) \coloneqq \mathbb{E}_{(x, y) \sim \mathcal{D}_i} \left[ \ell(w; x, y) \right].
  \label{eq:objective}
\end{equation}

\paragraph{Client sampling via dependent rounding.}
At each global round $t$, the server samples a cohort $S_t$ of $K$ distinct clients among $N$ clients to participate in training. 
We characterize client participation via normalized selection probabilities $p_i$, which reflect the long-term proportion of selections allocated to client $i$ over the cumulative sampling budget, satisfying $p_i \in [0, 1/K]$ and $\sum_{i=1}^N p_i = 1$. 
While multiple sampling schemes can realize this marginal distribution, we draw the $K$ distinct participants via dependent rounding~\citep{depround}, implemented as $S_t = \operatorname{DepRound}(K p)$ in Algorithm~\ref{alg:depround}. This choice ensures theoretical tractability, with structural properties detailed in Appendix~\ref{app:round-variance}.
The server then dispatches independent pseudorandom seeds $\{\mathrm{seed}_{i,t,j}^{e}\}_{e=0, j=1}^{E-1, q}$ to each selected client $i \in S_t$ for subsequent local ZO updates.

\begin{figure}[t]
\centering
\input{figures/overview_v2}
\caption{\textbf{Overview of the HO-FL framework.} (a) \textit{Client-side hybrid-order step}: the bottom segment $f_{i,\mathrm{ZO}}$ is updated via ZO perturbations evaluated on surrogate objective $\mathcal{L}_i$ guided by order-boundary activation feedback $\delta_i$, while the top segment $f_{i,\mathrm{FO}}$ performs standard BP. (b) \textit{System-level federated workflow}: the server selects cohorts via dimension-aware dependent rounding, broadcasts $w_t$, and reconstructs full model updates in $\mathbb{R}^d$ via seed replay for global model aggregation.}
\label{fig:overview}
% Separate panel references without changing the template's caption style.
\begingroup
\makeatletter
\protected@edef\@currentlabel{\thefigure(a)}\label{fig:overview-local}
\protected@edef\@currentlabel{\thefigure(b)}\label{fig:overview-system}
\makeatother
\endgroup
\end{figure}

\paragraph{Memory-adaptive boundaries.}
Client $i$ chooses a boundary $b_i$ according to its memory constraint and partitioning its trainable parameters into two parts: 
\begin{equation}
  w_i = \left[ (w_i^{\mathrm{ZO}})^\top, (w_i^{\mathrm{FO}})^\top \right]^\top,
  \label{eq:parameter-blocks}
\end{equation}
where $w_i^{\mathrm{ZO}} \in \mathbb{R}^{d_i^{\mathrm{ZO}}}$ and $w_i^{\mathrm{FO}} \in \mathbb{R}^{d_i^{\mathrm{FO}}}$ denote the bottom and top parameter segments, respectively, satisfying $d_i^{\mathrm{ZO}} + d_i^{\mathrm{FO}} = d$.
Since clients can update the ZO segment without caching the corresponding activation, flexible order boundaries allow resource-constrained devices to participate by expanding $w_i^{\mathrm{ZO}}$, while enabling more capable devices to retain larger FO segments.

\paragraph{A hybrid-order local step.}
As illustrated in Figure~\ref{fig:overview-local}, we represent the neural network on client $i$ via a bottom sub-network $f_{i,\mathrm{ZO}}(\cdot; w_i^{\mathrm{ZO}})$ and a top sub-network $f_{i,\mathrm{FO}}(\cdot; w_i^{\mathrm{FO}})$. For a minibatch $\xi=(x,y)$, the bottom segment first computes the boundary activation $z_i = f_{i,\mathrm{ZO}}(x; w_i^{\mathrm{ZO}})$ without caching intermediate activations. 
Feeding $z_i$ into the top segment yields the task prediction $\hat{y}_i = f_{i,\mathrm{FO}}(z_i; w_i^{\mathrm{FO}})$ and its associated loss $\ell(\hat{y}_i,y)$. Subsequently, backpropagation through the top segment produces both the parameter gradient with respect to $w_i^{\mathrm{FO}}$ and the activation gradient with respect to $z_i$: 
\begin{equation}
 g_i^{\mathrm{FO}}=\nabla_{w_i^{\mathrm{FO}}}\ell(w_i;\xi),\qquad
 \delta_i=\nabla_{z_i}\ell(f_{i,\mathrm{FO}}(z_i;w_i^{\mathrm{FO}}),y).
 \label{eq:top-feedback}
\end{equation}
Following HO-SFL~\citep{hosfl}, we construct a local surrogate objective for the bottom parameters $v$:
\begin{equation}
 \mathcal L_i(v)=\langle\delta_i,f_{i,\mathrm{ZO}}(x;v)\rangle.
 \label{eq:surrogate}
\end{equation}
By the chain rule, the gradient of this surrogate objective with respect to $v$ exactly recovers the first-order gradient of the original loss. Introducing this surrogate eliminates the need for end-to-end forward passes with perturbed parameters, restricting ZO evaluations strictly to the bottom segment.

Using pseudorandom seeds dispatched by the server, the client generates $q$ independent random perturbation vectors via a random number generator (RNG):
\begin{equation}
 u_{i,j}=\operatorname{RNG}(\mathrm{seed}_{i,j})
 \sim\mathcal N(0,I_{d_i^{\mathrm{ZO}}}),\qquad j=1,\ldots,q.
 \label{eq:seed-direction}
\end{equation}
The client then estimates the ZO gradient for the bottom segment using simultaneous perturbation stochastic approximation (SPSA):
\begin{equation}
  s_{i,j} = \frac{\mathcal{L}_i(w_i^{\mathrm{ZO}} + \mu u_{i,j}) - \mathcal{L}_i(w_i^{\mathrm{ZO}})}{\mu}, \qquad
  \widehat{g}_i^{\mathrm{ZO}} = \frac{1}{q} \sum_{j=1}^{q} s_{i,j} u_{i,j},
  \label{eq:local-zo}
\end{equation}
In our framework, each client performs this hybrid-order update for $E$ local steps per communication round. Algorithm~\ref{alg:client-update} combines the forward pass, top-segment BP, and bottom-segment estimation in a single local loop. The boundary gradient $\delta_i$ is recomputed at each step and held fixed within its $q$ finite differences. Indexing the global round by $t$ and the local iteration by $e$, both segment gradients are evaluated at $w_{i,t}^{e}$ before the updates:
\begin{equation}
  \begin{aligned}
    w_{i,t}^{\mathrm{ZO},e+1} &= w_{i,t}^{\mathrm{ZO},e} - \eta \widehat{g}_{i,t}^{\mathrm{ZO},e}, \\
    w_{i,t}^{\mathrm{FO},e+1} &= w_{i,t}^{\mathrm{FO},e} - \eta g_{i,t}^{\mathrm{FO},e},
  \end{aligned}
  \qquad e = 0, \dots, E-1.
  \label{eq:local-updates}
\end{equation}

\paragraph{Client uploads.}
Upon completing $E$ local steps, client $i$ uploads only the sequence of directional projection scalars $\{s_{i,t,j}^{e}\}$ and the accumulated top-segment parameter difference:
\begin{equation}
  \Delta_{i,t}^{\mathrm{FO}} = w_{i,t}^{\mathrm{FO},E} - w_{i,t}^{\mathrm{FO},0}.
  \label{eq:top-difference}
\end{equation}
Because the random seeds are pre-assigned by the server, the client does not need to transmit the high-dimensional perturbation vectors. Compared to conventional FL where full model deltas in $\mathbb{R}^d$ must be uploaded, the ZO component in HO-FL achieves dimension-free uplink transmission. As a consequence, configuring a larger ZO segment not only alleviates peak activation memory on the client, but also substantially reduces uplink communication overhead.

\paragraph{Server reconstruction and aggregation.}
Upon collecting the local updates from the selected participant set $S_t$, the server calls \textsc{Reconstruct} (Algorithm~\ref{alg:reconstruct}) to recover the bottom-segment parameter updates using the known seeds:
\begin{equation}
  u_{i,t,j}^{e} = \operatorname{RNG}(\mathrm{seed}_{i,t,j}^{e}), \qquad
  \Delta_{i,t}^{\mathrm{ZO}} = -\frac{\eta}{q} \sum_{e=0}^{E-1} \sum_{j=1}^{q} s_{i,t,j}^{e} u_{i,t,j}^{e}.
  \label{eq:bottom-reconstruction}
\end{equation}
The server then concatenates $\Delta_{i,t}^{\mathrm{ZO}}$ and the received $\Delta_{i,t}^{\mathrm{FO}}$ according to client $i$'s order boundary $b_i$, reconstructing the full model update $\Delta_{i,t} = [(\Delta_{i,t}^{\mathrm{ZO}})^\top, (\Delta_{i,t}^{\mathrm{FO}})^\top]^\top \in \mathbb{R}^d$. Although participating devices adopt heterogeneous order boundaries, their updates are fully compatible in the shared parameter space, enabling direct coordinate-wise model averaging:
\begin{equation}
  w_{t+1} = w_t + \frac{1}{K} \sum_{i \in S_t} \Delta_{i,t}.
  \label{eq:aggregation-step}
\end{equation}
The updated model $w_{t+1}$ is subsequently broadcast to clients for the next round. Under this design, the client uplink transmission per round requires only $\mathcal{O}(Eq + d_i^{\mathrm{FO}})$ elements, which scales gracefully even for overparameterized architectures. For stateful optimizers such as AdamW, the server exactly mirrors the client trajectory via deterministic replay, as elaborated in Appendix~\ref{app:adamw-replay}.

% Main algorithm: gradient updates match the theory and Figure 1.
% Shared algorithm styling. Palette follows the current Overleaf Figure 1.
% Load after algorithm, algpseudocode, xcolor, and hyperref.
\ifdefined\HOFLAlgorithmStyleLoaded\else
\def\HOFLAlgorithmStyleLoaded{1}
\definecolor{hoAlgZO}{HTML}{244A78}
\definecolor{hoAlgFO}{HTML}{D98732}
\definecolor{hoAlgServer}{HTML}{39816B}
\definecolor{hoAlgGray}{HTML}{65717D}
\newcommand{\HOFunc}[2]{\textcolor{#1}{\textbf{\textsc{#2}}}}
\newcommand{\HOCall}[4]{\hyperref[#2]{\HOFunc{#1}{#3}}\ensuremath{(#4)}}
\newcommand{\HOComment}[1]{\Comment{\textcolor{hoAlgGray}{\scriptsize #1}}}
\newcommand{\HOPhase}[1]{\Statex\textcolor{hoAlgGray}{\textit{// #1}}}
\fi

\begin{algorithm}[!htbp]
\caption{HO-FL: hybrid-order federated optimization}
\label{alg:hofl}
\begin{algorithmic}[1]
\Require $w_0$; $\{b_i,\mathcal D_i\}_{i=1}^{N}$; $T,K,E,q,\mu,\eta,\beta$.
\State Set $r$ by~\eqref{eq:practical-policy}.
\State $p_i\gets(1-\beta)/N+\beta r_i$, $i=1,\ldots,N$.
\For{$t=0,\ldots,T-1$}
  \HOPhase{Server: sampling and broadcast}
  \State $S_t\gets$ \HOCall{hoAlgServer}{alg:depround}{DepRound}{Kp}.
    \HOComment{Algorithm~\ref{alg:depround}}
  \State Broadcast $w_t$; dispatch independent $\{\mathrm{seed}_{i,t,j}^{e}\}_{e,j}$ to each $i\in S_t$.
  \HOPhase{Clients: local hybrid-order training}
  \For{each $i\in S_t$ \textbf{in parallel}}
    \State $(\{s_{i,t,j}^{e}\}_{e,j},\Delta_{i,t}^{\mathrm{FO}})\gets$
      \HOCall{hoAlgZO}{alg:client-update}{ClientUpdate}{i,t,w_t}.
      \HOComment{Algorithm~\ref{alg:client-update}}
    \State Upload \textcolor{hoAlgZO}{$\{s_{i,t,j}^{e}\}_{e,j}$} and
      \textcolor{hoAlgFO}{$\Delta_{i,t}^{\mathrm{FO}}$}.
  \EndFor
  \HOPhase{Server: reconstruction and direct averaging}
  \For{each $i\in S_t$}
    \State $\Delta_{i,t}\gets$
      \HOCall{hoAlgServer}{alg:reconstruct}{Reconstruct}{i,t,\{s_{i,t,j}^{e}\}_{e,j},\Delta_{i,t}^{\mathrm{FO}}}.
      \HOComment{Algorithm~\ref{alg:reconstruct}}
  \EndFor
  \State \textcolor{hoAlgServer}{$w_{t+1}\gets w_t+\frac1K\sum_{i\in S_t}\Delta_{i,t}$}.
    \HOComment{Eq.~\eqref{eq:aggregation-step}}
\EndFor
\State \Return $w_T$.
\end{algorithmic}
\end{algorithm}

% ===== BEGIN sections/theory_v3.tex =====
\section{Convergence Analysis}
\label{sec:theory}

In this section, we analyze the theoretical convergence of the HO-FL local updates in~\eqref{eq:local-updates} under non-convex objectives, establishing the theoretical foundation for our sampling policy in Section~\ref{sec:sampling}.

\subsection{Theoretical Assumptions}
\label{sec:theory-assumptions}
Our analysis builds upon standard regularity conditions in non-convex distributed and zeroth-order optimization~\citep{scaffold,hosfl}. 
Specifically, we assume that each client objective $f_i$ is $L$-smooth and the global loss is bounded below (Assumption~\ref{ass:objective}); reference stochastic gradients are conditionally unbiased with bounded variance (Assumption~\ref{ass:reference}); the local surrogate $\mathcal{L}_i$ is $L_i^s$-smooth (Assumption~\ref{ass:surrogate-new}); and inter-client data divergence satisfies the affine heterogeneity condition with residual heterogeneity bound $\zeta^2$ (Assumption~\ref{ass:affine-heterogeneity}). 
Formal mathematical statements and regularity discussions are deferred to Appendix~\ref{app:conditions}.

\subsection{Non-Convex Convergence Rate}
\label{sec:theory-convergence}
By controlling the multi-step local trajectory drift and hybrid gradient estimation variance (with intermediate descent guarantees deferred to Appendix~\ref{app:main-proof}), and properly calibrating the step size and perturbation radius, we establish the following convergence rate of HO-FL.

\begin{theorem}[Informal Statement of Non-Convex Convergence]
\label{thm:convergence}
Under Assumptions~\ref{ass:objective}--\ref{ass:affine-heterogeneity}, with the local step size $\eta$ and perturbation radius $\mu$ suitably calibrated, the iterates of HO-FL satisfy:
\begin{equation}
  \frac{1}{T}\sum_{t=0}^{T-1}\mathbb E\|\nabla f(w_t)\|^2
  \le
  \underbrace{\mathcal O\!\left(\sqrt{\frac{1+\frac{1}{q}\sum_{i=1}^N p_i d_i^{\mathrm{ZO}}}{T}}\right)}_{\text{Optimization Rate}}
  +\underbrace{\mathcal O\!\left(\zeta^2N\sum_{i=1}^N\left(p_i-\frac{1}{N}\right)^2\right)}_{\text{Sampling Representation Bias}}.
  \label{eq:main-explicit-rate}
\end{equation}
\end{theorem}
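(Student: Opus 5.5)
The proof follows the standard one-round descent template for local-SGD-type methods. The key device is to view the aggregated update as a stochastic estimate of a \emph{$p$-weighted} objective and then pay separately for the mismatch with the uniform objective $f$.

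\textbf{Step 1: unbiasedness of the aggregate.} The dependent-rounding marginals from Appendix~\ref{app:round-variance} give $\Pr[i\in S_t]=Kp_i$. Hence
\[
\mathbb E_{S_t}\Big[\tfrac1K\sum_{i\in S_t}\Delta_{i,t}\Big]=\sum_{i=1}^N p_i\,\mathbb E\Delta_{i,t},
\]
and the negative-correlation property of DepRound bounds the sampling variance by the independent-sampling variance.

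\textbf{Step 2: smoothness expansion.} Apply $L$-smoothness of $f$ to $w_{t+1}=w_t+\frac1K\sum_{i\in S_t}\Delta_{i,t}$. This yields an inner-product term $\langle\nabla f(w_t),\sum_i p_i\mathbb E\Delta_{i,t}\rangle$ and a quadratic term. I would rewrite $\mathbb E\Delta_{i,t}\approx-\eta E\,\nabla f_i(w_t)$ plus three errors:
\begin{itemize}[leftmargin=15pt]
\item the local drift $\sum_e(\nabla f_i(w^e_{i,t})-\nabla f_i(w_t))$;
\item the SPSA smoothing bias on the ZO block, of order $\mu L_i^s d_i^{\mathrm{ZO}}$, obtained using the surrogate $\mathcal L_i$ and the fact that its gradient equals the true ZO-block gradient;
\item the FO stochastic noise.
\end{itemize}

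\textbf{Step 3: separating the $p$-weighting.} Write
\[
\sum_i p_i\nabla f_i=\nabla f+\sum_i\big(p_i-\tfrac1N\big)\nabla f_i .
\]
Using $\langle a,b\rangle\ge\frac12\|a\|^2-\frac12\|a-b\|^2$, the descent term is $-\frac{\eta E}{2}\|\nabla f\|^2$ plus a penalty $\|\sum_i(p_i-\frac1N)(\nabla f_i-\nabla f)\|^2$, since $\sum_i(p_i-\frac1N)=0$. Cauchy--Schwarz bounds this penalty by $N\sum_i(p_i-\frac1N)^2\cdot\frac1N\sum_i\|\nabla f_i-\nabla f\|^2$. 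The affine heterogeneity assumption (Assumption~\ref{ass:affine-heterogeneity}) then bounds the last factor by $A\|\nabla f\|^2+\zeta^2$. The $A$-part is absorbed into the descent once $N\sum_i(p_i-\frac1N)^2$ is suitably small or the constant is adjusted. The $\zeta^2$-part produces exactly the bias term $\zeta^2N\sum_i(p_i-1/N)^2$, which is not multiplied by $\eta$ relative to the descent and so survives as a floor.

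\textbf{Step 4: variance and drift control.} The Gaussian estimator satisfies
\[
\mathbb E\|\widehat g^{\mathrm{ZO}}_i\|^2\lesssim\big(1+\tfrac{d_i^{\mathrm{ZO}}}{q}\big)\|\nabla_{\mathrm{ZO}}\ell\|^2+\mu^2(L_i^s)^2(d_i^{\mathrm{ZO}})^3 .
\]
Together with the FO variance $\sigma^2$ from Assumption~\ref{ass:reference}, the per-step second moment is $\lesssim(1+d_i^{\mathrm{ZO}}/q)(\|\nabla f_i\|^2+\sigma^2)$. Averaging with weights $p_i$ gives the factor $1+\frac1q\sum_ip_id_i^{\mathrm{ZO}}$. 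The drift is handled by the usual recursion over $e$: with $\eta EL\sqrt{1+d_{\max}^{\mathrm{ZO}}/q}\le c$, one gets $\mathbb E\|w^e_{i,t}-w_t\|^2\lesssim\eta^2E^2(1+d_i^{\mathrm{ZO}}/q)(\|\nabla f_i(w_t)\|^2+\sigma^2)$. The gradient part is again absorbed via heterogeneity.

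\textbf{Step 5: tuning.} Telescope over $t$ and divide by $\eta ET$. The bound becomes $\frac{f(w_0)-f^*}{\eta ET}+\eta L\,(1+\frac1q\sum_ip_id_i^{\mathrm{ZO}})\sigma^2+O(\mu^2)+\zeta^2N\sum_i(p_i-\frac1N)^2$. Choosing $\eta\propto 1/\sqrt{T(1+\frac1q\sum p_id_i^{\mathrm{ZO}})}$ and $\mu$ small enough (e.g.\ $\mu\lesssim T^{-1/4}/d^{3/2}$) gives the stated $O(\sqrt{\cdot/T})$ rate.

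\textbf{Main obstacle.} I expect Step 3 combined with Step 4 to be the hardest part. The heterogeneity-induced $\|\nabla f\|^2$ terms arise from both the reweighting and the drift, and each must be absorbed without the ZO dimension factor polluting the bias term. I would first try keeping the drift term weighted by $p_i$ and applying heterogeneity only after splitting $\sum p_i\|\nabla f_i\|^2\le 2\|\nabla f\|^2+2\sum p_i\|\nabla f_i-\nabla f\|^2$. This requires $p_i\le 1/K$ and a step-size condition depending on $d_{\max}^{\mathrm{ZO}}$.
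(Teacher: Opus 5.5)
Your outline has the same architecture as the paper's proof (Proposition~\ref{prop:trajectory-descent} followed by Theorem~\ref{thm:convergence-formal}). The shared steps are:
\begin{itemize}
\item DepRound marginals $Kp_i$ and nonpositive inclusion covariances to control the cohort variance;
\item one smoothness step on $f$;
\item splitting the aggregate bias into selection skew, local drift and ZO approximation bias;
\item Cauchy--Schwarz giving $\chi^2(p)H$, with $\chi^2(p)=N\sum_i(p_i-\frac1N)^2$;
\item absorption via the affine heterogeneity bound, and the drift recursion;
\item the tuning $\eta E\propto(T(1+\frac1q\sum_ip_id_i^{\mathrm{ZO}}))^{-1/2}$ with $\mu^2M_p\lesssim\sqrt{(1+\frac1q\sum_ip_id_i^{\mathrm{ZO}})/T}$.
\end{itemize}
One minor difference: the paper keeps the $\frac h2\|\mathbb E_tM_t\|^2$ term from the polarization identity, so it cancels the squared mean in the smoothness term once $Lh\le1$. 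Your lower bound on $\langle a,b\rangle$ drops it, which only costs an extra absorption.

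\textbf{The gap is in Step~4.} You claim that averaging with weights $p_i$ produces the factor $1+\frac1q\sum_ip_id_i^{\mathrm{ZO}}$. What actually appears, in the first-order-in-$\eta$ variance contribution (of the form $L\eta\sum_ip_i\kappa_i\|\Pi_i\nabla f_i(w_t)\|^2$) and in the drift $\sum_ip_iD_i$, is $\sum_ip_i(1+\kappa_i)\|\nabla f_i(w_t)\|^2$. Each dimension weight multiplies that client's own gradient.

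Your proposed remedy (split $\sum_ip_i\|\nabla f_i\|^2$ and use $p_i\le1/K$) throws away the $p$-weights exactly where they are needed. You must first replace $\kappa_i$ by $\kappa_{\max}$, or else end up with the unweighted $\frac1K\sum_i\kappa_i\|\nabla f_i-\nabla f\|^2$. Either way, the $\zeta^2$ part of the variance term is of order $L\eta\kappa_{\max}\frac NK\zeta^2$. After tuning $\eta$, this equals $\sqrt{(1+\frac1q\sum_ip_id_i^{\mathrm{ZO}})/T}$ times the non-vanishing factor $\kappa_{\max}/(1+\frac1q\sum_ip_id_i^{\mathrm{ZO}})$. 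The rate would then be governed by the largest ZO segment, not the sampling-weighted one. For the drift, which is second order in $\eta$, the same loss would only cost a burn-in.

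\textbf{How the paper avoids this.} It uses the crude pointwise bound $\|\nabla f_i-\nabla f\|^2\le NH$ for every $i$. This gives $\sum_ip_i(1+\kappa_i)\|\nabla f_i\|^2\le4N(1+\frac1q\sum_ip_id_i^{\mathrm{ZO}})(\|\nabla f\|^2+H)$: a factor $N$ enters the constant, but the $p$-weighting survives.

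\textbf{Where $\kappa_{\max}$ genuinely enters.} This is the stability condition $4(1+\kappa_{\max})L^2\eta^2(E-1)^2\le\frac12$. There the paper adds a coverage floor $p_i\ge\epsilon/N$, so that $\kappa_{\max}\le\frac{2N}{\epsilon}(1+\frac1q\sum_ip_id_i^{\mathrm{ZO}})$. The condition then reduces to a horizon requirement $T\ge C_\star(1+\frac1q\sum_ip_id_i^{\mathrm{ZO}})$ with a dimension-free $C_\star$. Without such a floor, your $d_{\max}^{\mathrm{ZO}}$-dependent step-size condition becomes a $d_{\max}^{\mathrm{ZO}}$-dependent burn-in.

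\textbf{Absorption needs a real condition.} Absorbing $3a_{\mathrm H}\chi^2(p)\|\nabla f\|^2$ requires a genuine smallness condition ($a_{\mathrm H}\chi^2(p)\le\frac1{12}$ in the paper). It cannot be arranged by adjusting constants.
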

Here, $\zeta^2$ represents the client data heterogeneity parameter defined in Assumption~\ref{ass:affine-heterogeneity}. The formal statement with explicit step-size conditions and the complete proof is provided in Appendix~\ref{app:affine}.

\paragraph{Theoretical generality.}
Theorem~\ref{thm:convergence} provides a unified theoretical guarantee that subsumes existing hybrid-order paradigms as special cases. 
Specifically, under uniform client participation ($p_i = 1/N$) and a global identical order boundary ($d_i^{\mathrm{ZO}} \equiv d^{\mathrm{ZO}}$), the sampling representation bias vanishes, reducing the bound in~\eqref{eq:main-explicit-rate} to:
\begin{equation}
  \frac{1}{T}\sum_{t=0}^{T-1}\mathbb E\|\nabla f(w_t)\|^2
  \le \mathcal O\!\left(\sqrt{\frac{1 + d^{\mathrm{ZO}}/q}{T}}\right).
  \label{eq:rate-ho-recovery}
\end{equation}
This rate exactly recovers the non-convex convergence established in prior hybrid-order frameworks such as HO-SFL~\citep{hosfl}. 
Furthermore, at the architectural extremes, Eq.~\eqref{eq:rate-ho-recovery} naturally collapses to the canonical $\mathcal{O}(1/\sqrt{T})$ rate of FedAvg~\citep{wang2023fedavg} for full first-order training ($d^{\mathrm{ZO}} = 0$), and the $\mathcal{O}(\sqrt{(1 + d/q)/T})$ rate of pure ZO-FL~\citep{fedzo,decomfl} when updates are entirely zeroth-order ($d^{\mathrm{ZO}} = d$). 
This demonstrates that our convergence bound offers a strictly more general theoretical framework.

% ===== END sections/theory_v3.tex =====

% \paragraph{Optimization opportunity via non-uniform sampling.}
% Crucially, our bound exposes a profound optimization opportunity enabled by non-uniform sampling across heterogeneous edge devices. 
% Under uniform client selection, the optimization rate in~\eqref{eq:rate-uniform-reduction} is unavoidably bottlenecked by the unweighted arithmetic average $\frac{1}{N}\sum_i d_i^{\mathrm{ZO}}$, which is heavily degraded by severely memory-constrained devices with large ZO segments. 
% In contrast, non-uniform participation probabilities $p$ empower the server to prioritize clients with larger FO segments, actively driving the sampling-weighted dimension $\sum_{i=1}^N p_i d_i^{\mathrm{ZO}}$ well below the arithmetic mean. 
% While prioritizing low-variance updates incurs a persistent representation penalty $\mathcal{O}(\zeta^2 N \|p - u\|^2)$ proportional to client data heterogeneity $\zeta^2$, properly calibrating $p$ unlocks a superior optimization trajectory. 
% This principled Pareto trade-off between directional variance suppression and global objective fidelity directly inspires the algorithmic design of our dimension-aware sampling policy in Section~\ref{sec:sampling}.
\section{Dimension-Aware Client Sampling}
\label{sec:sampling}

Theorem~\ref{thm:convergence} exposes an optimization opportunity: non-uniform selection $p$ can lower the sampling-weighted dimension by prioritizing FO-dominant updates, at the cost of a representation penalty $\mathcal{O}(\zeta^2 N \|p-1/N\|^2)$. 
This principled trade-off between directional variance reduction and data representation directly drives the algorithmic design of our dimension-aware sampling policy.

\paragraph{Problem formulation}To design a principled sampling policy, we isolate the sampling-dependent terms in our multi-step trajectory descent bound (Proposition~\ref{prop:trajectory-descent}). 
Suppressing the round index, let $U_i \coloneqq (w - w_i^E)/h$ denote client $i$'s normalized local update with effective step size $h \coloneqq \eta E$, $a_i \coloneqq \mathbb{E}[U_i]$ be its expected trajectory, $v_i \coloneqq \mathbb{E}\|U_i - a_i\|^2$ be its local variance, $\bar a \coloneqq \frac{1}{N}\sum_i a_i$ be the unweighted cohort mean, and $g \coloneqq \nabla f(w)$ represent the true global gradient. 
Over the constrained simplex $\mathcal C \coloneqq \{p \in \mathbb{R}^N : \sum_i p_i = 1, \; \epsilon/N \le p_i \le 1/K\}$, minimizing the descent bound yields the ideal statistical quadratic program (QP):
\begin{equation}
  \min_{p \in \mathcal C} \;
  \left\|\sum_{i=1}^N p_i a_i - g\right\|^2
  + \frac{Lh}{K} \sum_{i=1}^N p_i \left[ v_i + 2\|a_i - \bar a\|^2 \right].
  \label{eq:sampling-qp}
\end{equation}
Problem~\eqref{eq:sampling-qp} serves as an optimal reference: the first term minimizes representation bias relative to $g$, while the second penalizes cohort variance induced by local noise $v_i$ and client drift dispersion $\|a_i - \bar a\|^2$. 
However, solving~\eqref{eq:sampling-qp} online is intractable because trajectory moments $\{a_i, v_i\}$ and gradient $g$ are unobservable prior to local execution.

\paragraph{Scalar relaxation and water-filling structure.}
To decouple directional cross-terms and eliminate the unknown gradient $g$, we relax the squared bias via Cauchy--Schwarz: $\|\sum_i p_i a_i - g\|^2 \le 2 N H_a \|p - u\|^2 + 2\|\bar a - g\|^2$, where $H_a \coloneqq \frac{1}{N}\sum_i \|a_i - \bar a\|^2$ measures empirical update dispersion and $u \coloneqq [1/N, \dots, 1/N]^\top$. 
Dropping the $p$-independent offset $2\|\bar a - g\|^2$ transforms~\eqref{eq:sampling-qp} into a separable scalar program:
\begin{equation}
  \min_{p \in \mathcal C} \; A \|p - u\|^2 + c^\top p, \quad
  \text{where} \quad A \coloneqq 2 N H_a, \quad
  c_i \coloneqq \frac{Lh}{K}\big[ v_i + 2\|a_i - \bar a\|^2 \big].
  \label{eq:scalar-qp}
\end{equation}
When $A > 0$, the KKT conditions admit a closed-form \textit{water-filling} solution:
\begin{equation}
  p_i^\star = \operatorname{clip}_{[\epsilon/N, \, 1/K]} \left( \frac{1}{N} - \frac{c_i + \lambda}{2A} \right),
  \label{eq:clip-solution_app}
\end{equation}
where scalar dual multiplier $\lambda$ enforces $\sum_i p_i^\star = 1$. 
Appendix~\ref{app:qp} provides the full relaxation and KKT derivations.
Equation~\eqref{eq:clip-solution_app} reveals the qualitative behavior of optimal sampling: it prioritizes clients with smaller noise costs $c_i$ up to the capacity cap $1/K$, while pulling coordinates toward uniform $1/N$ to bound representation error.

\paragraph{The practical HO-FL policy.}
Evaluating runtime costs $c_i$ remains costly as it requires dynamic variance tracking. 
To achieve zero runtime overhead while retaining the water-filling principle, we substitute $c_i$ with the static \textit{dimension score} $s_i \coloneqq d_i^{\mathrm{ZO}}/q$, grounded in the $(d_i^{\mathrm{ZO}} + 1)/q$ variance penalty in Lemma~\ref{lem:ho-moments}. 
Under this proxy, minimizing noise alone ($\min_{p \in \mathcal C} s^\top p$) is greedily solved by saturating the $K$ clients with the smallest scores at $1/K$, yielding $r_i \coloneqq \frac{1}{K}\mathbf{1}\{i \in R\}$, where $R$ denotes the greedy subset. 
Directly parameterizing the trade-off line segment connecting the representation-optimal distribution $u$ and the noise-optimal distribution $r$ produces our closed-form HO-FL policy:
\begin{equation}
  p_i = (1 - \beta)\frac{1}{N} + \beta r_i, \qquad \beta \in [0, 1).
  \label{eq:practical-policy}
\end{equation}
Equation~\eqref{eq:practical-policy} automatically satisfies the bounded simplex constraints with $p_i \le 1/K$ and coverage floor $p_i \ge (1 - \beta)/N$. 
Here, hyperparameter $\beta$ serves as an explicit control knob governing the theoretical trade-off in Theorem~\ref{thm:convergence}: setting $\beta = 0$ recovers uniform sampling and eliminates representation bias ($\|p - u\|^2 = 0$), whereas increasing $\beta$ tilts selection toward FO-dominant clients, actively suppressing directional variance while analytically bounding representation bias by $N\|p - u\|^2 = \beta^2(N/K - 1)$. Practical guidance for selecting a fixed $\beta$ provided in Appendix~\ref{app:beta-practice}.
\section{Experiments}
\label{sec:experiments}
In this section, we empirically evaluate \textbf{HO-FL} across diverse foundation models and downstream tasks, centered on three core research questions:
\begin{itemize}[leftmargin=18pt]
  \item \textbf{RQ1 (Task Performance):} Can HO-FL close the accuracy gap to full first-order federated optimization while accommodating heterogeneous client order boundary?
  \item \textbf{RQ2 (Memory Feasibility):} How effectively does hybrid-order local training reduce client peak activation memory, and what are the system-wide resource footprints?
  \item \textbf{RQ3 (Heterogeneity \& Trade-offs):} How does the dimension-aware sampling policy ($\beta$) interact with data non-IIDness ($\alpha$), and does it validate our theoretical convergence bound?
\end{itemize}

\subsection{Experimental Setup}
\label{sec:exp-setup}

\begin{table}[!htbp]
\centering
\begin{minipage}[c]{0.56\textwidth}
\paragraph{Federated environment and baselines.}
We simulate an edge system of $N=30$ clients with cohort size $K=6$ over 160 communication rounds.
Each active client executes $E=5$ local AdamW steps.
To emulate memory heterogeneity, clients are evenly partitioned across three label-independent boundary tiers (Table~\ref{tab:hardware-tiers}).
We benchmark HO-FL ($\beta=0.75$) against full-FO (\textbf{FedAvg}~\citep{fedavg}, \textbf{FedProx}~\citep{fedprox}), pure-ZO \textbf{DeComFL}~\citep{decomfl}, and the uniform sampling version of HO-FL ($\beta=0$).
Implementation details are in Appendix~\ref{app:experiments}.
\end{minipage}\hfill
\begin{minipage}[c]{0.41\textwidth}
\centering
\caption{\textbf{Boundary tiers.} Order Boundary setting across 3 tiers.}
\label{tab:hardware-tiers}
\small
\begin{tabular}{lcccc}
\toprule
\multirow{2}{*}{Model} & \multirow{2}{*}{Layers} & \multicolumn{3}{c}{Order Boundary} \\
\cmidrule(lr){3-5}
 & & T1 & T2 & T3 \\
\midrule
OPT-125M     & 12 & 3 & 6  & 9  \\
Qwen2.5-1.5B & 28 & 8 & 16 & 24 \\
SmolLM3-3B   & 36 & 9 & 18 & 27 \\
\bottomrule
\end{tabular}
\end{minipage}
\end{table}

\paragraph{Models and benchmarks.}
We evaluate HO-FL by fine-tuning OPT-125M, Qwen2.5-1.5B, and SmolLM3-3B~\citep{opt,qwen25,smollm3} with LoRA adapters~\citep{lora} .
The benchmarks include SST-2, BoolQ, SciQ, SQuAD 1.1, and AG News~\citep{sst2,boolq,sciq,squad,zhang2015character}.
Tasks are evaluated under IID client partitions, while AG News additionally incorporates Dirichlet label-skewed partitions $\operatorname{Dir}(\alpha)$ with $\alpha \in \{1, 0.5, 0.1\}$ to evaluate non-IID robustness.

\subsection{Empirical Evaluation and Analysis}
\label{sec:exp-results}

\begin{table}[!htbp]
\centering
\caption{\textbf{Language-task accuracy (\%).} Reported as $\text{mean}_{(\text{SD})}$ across three seeds. All tasks are IID except AG News ($\alpha=0.5$).}
\label{tab:llm-v2}
\small
\begin{tabular}{llccccc}
\toprule
Model & Task & FedAvg & FedProx & DeComFL & \shortstack{HO-FL\\($\beta=0$)} & \shortstack{HO-FL\\($\beta=0.75$)} \\
\midrule
OPT-125M & SST-2 & $87.04_{(0.11)}$ & $87.12_{(0.13)}$ & $51.95_{(0.00)}$ & $86.01_{(0.11)}$ & $86.54_{(0.24)}$ \\
 & BoolQ & $59.84_{(0.37)}$ & $59.85_{(0.46)}$ & $55.87_{(0.05)}$ & $57.91_{(0.18)}$ & $59.44_{(0.96)}$ \\
 & SciQ & $25.50_{(1.30)}$ & $25.43_{(1.04)}$ & $25.53_{(0.85)}$ & $24.97_{(0.80)}$ & $26.10_{(1.64)}$ \\
 & AG News ($\alpha=0.5$) & $78.24_{(2.97)}$ & $78.29_{(2.99)}$ & $23.38_{(0.08)}$ & $59.58_{(2.24)}$ & $62.85_{(6.52)}$ \\
\midrule
Qwen2.5-1.5B & SST-2 & $93.27_{(0.26)}$ & $93.43_{(0.26)}$ & $71.25_{(0.18)}$ & $91.09_{(0.18)}$ & $92.81_{(0.07)}$ \\
 & BoolQ & $79.06_{(0.19)}$ & $79.11_{(0.08)}$ & $72.44_{(0.12)}$ & $77.00_{(0.40)}$ & $78.72_{(0.24)}$ \\
 & SciQ & $90.83_{(0.32)}$ & $90.87_{(0.31)}$ & $88.87_{(0.46)}$ & $90.17_{(0.25)}$ & $90.47_{(0.29)}$ \\
 & AG News ($\alpha=0.5$) & $88.15_{(0.21)}$ & $88.23_{(0.17)}$ & $55.83_{(0.08)}$ & $85.51_{(0.43)}$ & $86.69_{(0.56)}$ \\
\midrule
SmolLM3-3B  & SST-2 & $94.04_{(0.11)}$ & $94.07_{(0.07)}$ & $63.38_{(0.18)}$ & $93.16_{(0.07)}$ & $94.04_{(0.30)}$ \\
 & BoolQ & $86.15_{(0.05)}$ & $86.01_{(0.14)}$ & $82.59_{(0.07)}$ & $84.32_{(0.12)}$ & $85.37_{(0.43)}$ \\
 & SciQ & $90.50_{(0.17)}$ & $90.57_{(0.35)}$ & $89.13_{(0.38)}$ & $89.57_{(0.35)}$ & $89.77_{(0.21)}$ \\
 & AG News ($\alpha=0.5$) & $89.38_{(0.14)}$ & $89.37_{(0.18)}$ & $74.70_{(0.05)}$ & $87.11_{(0.22)}$ & $87.52_{(0.55)}$ \\
\bottomrule
\end{tabular}
\end{table}

\paragraph{Downstream task performance (RQ1).}
As shown in Table~\ref{tab:llm-v2}, HO-FL with $\beta=0.75$ closely approaches FO baselines accuracy across language understanding tasks, remaining within 0.5\%--0.8\% of FedAvg on most benchmarks while surpassing DeComFL by up to 34.59\% (86.54\% vs. 51.95\% on OPT-125M/SST-2).
Moreover, dimension-aware sampling consistently outperforms uniform hybrid selection with $\beta=0$, yielding up to a 3.27\% accuracy gain under label skew on AG News at $\alpha=0.5$.

On generative QA evaluated on SQuAD 1.1 (Table~\ref{tab:squad-v2}), where pure ZO degrades severely as DeComFL achieves only 8.54\% F1 on OPT-125M, HO-FL reliably restores convergence by lifting to 52.43\%.
Specifically, dimension-aware sampling improves F1 over uniform hybrid selection by 1.3\%--5.2\% across architectures, narrowing the performance gap with FedAvg to within 0.7\%--2.3\%.

\begin{table}[!htbp]
\centering
\caption{\textbf{SQuAD 1.1 performance (\%).} Full-validation EM/F1, reported as $\text{mean}_{(\text{SD})}$ across three seeds.}
\label{tab:squad-v2}
\small
\begin{tabular}{llccccc}
\toprule
Model & Metric & FedAvg & FedProx & DeComFL & \shortstack{HO-FL\\($\beta=0$)} & \shortstack{HO-FL\\($\beta=0.75$)} \\
\midrule
OPT-125M & EM & $43.95_{(0.04)}$ & $43.92_{(0.10)}$ & $1.29_{(0.05)}$ & $37.16_{(0.26)}$ & $41.87_{(0.32)}$ \\
 & F1 & $54.68_{(0.02)}$ & $54.65_{(0.08)}$ & $8.54_{(0.09)}$ & $47.24_{(0.21)}$ & $52.43_{(0.20)}$ \\
\midrule
Qwen2.5-1.5B & EM & $79.65_{(0.14)}$ & $79.70_{(0.08)}$ & $61.70_{(0.11)}$ & $76.67_{(0.05)}$ & $78.71_{(0.19)}$ \\
 & F1 & $87.59_{(0.07)}$ & $87.61_{(0.04)}$ & $73.14_{(0.16)}$ & $85.34_{(0.03)}$ & $86.92_{(0.10)}$ \\
\midrule
SmolLM3-3B  & EM & $81.52_{(0.10)}$ & $81.47_{(0.20)}$ & $47.98_{(0.09)}$ & $77.80_{(0.11)}$ & $79.54_{(0.20)}$ \\
 & F1 & $89.93_{(0.10)}$ & $89.90_{(0.16)}$ & $68.43_{(0.04)}$ & $87.25_{(0.04)}$ & $88.57_{(0.13)}$ \\
\bottomrule
\end{tabular}
\end{table}

\FloatBarrier
\begin{figure}[!htbp]
\centering
\includegraphics[width=\linewidth]{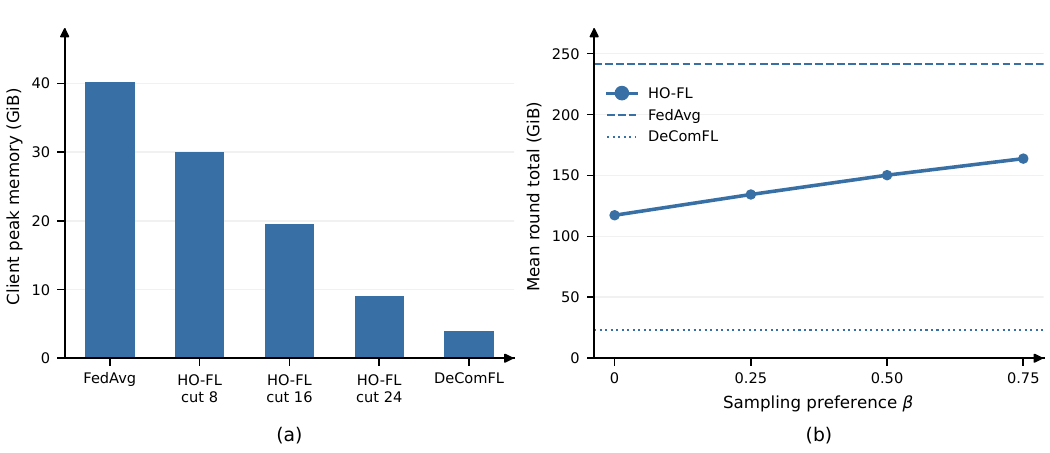}
\caption{\textbf{Single-client peak and system average memory footprints on Qwen2.5-1.5B.} Left: measured single-client peak allocated memory across different order boundaries during local training (batch size 16 and length 1024). Right: round-averaged cohort memory demand $\overline C(L)$ across participating clients ($N=30, K=6$) under varying $\beta$.}
\label{fig:memory-v2}
\end{figure}

\paragraph{On-device memory footprint and feasibility (RQ2).}
As illustrated in Figure~\ref{fig:memory-v2}, HO-FL systematically resolves the activation memory bottleneck during local training.
At sequence length 1024, full BP (FedAvg) incurs a prohibitive peak memory of 40.26~GiB on Qwen2.5-1.5B.
By setting the boundary at block 24 to truncate the backward computation graph, HO-FL slashes peak allocated memory to 9.08~GiB—achieving a \textbf{77.4\% reduction} that brings 1.5B LLM adaptation within reach of commodity 12~GiB devices.

To evaluate distributed system overhead, we profile the expected cohort memory demand $\overline C(L) = T^{-1}\sum_t\sum_{i\in S_t}m_i(L)$ at sequence length 1024 (Figure~\ref{fig:memory-v2}).
Uniform hybrid sampling ($\beta=0$) and dimension-aware sampling ($\beta=0.75$) require an average of 117.32 GiB and 163.78 GiB per round, respectively, yielding substantial savings over homogeneous FedAvg (241.57 GiB).
Although a higher $\beta$ moderately increases aggregate resource utilization by prioritizing FO-dominant clients, it accelerates global convergence while strictly respecting each client's physical memory budget.

\subsection{Sampling Preference and Data Heterogeneity (RQ3)}
\label{sec:heterogeneity-study}
\begin{table}[tbh]
\centering
\caption{\textbf{Sampling preference and data heterogeneity.} Qwen/AG News final test accuracy (\%), reported as $\text{mean}_{(\text{SD})}$ across three seeds.}
\label{tab:beta-v2}
\begin{tabular}{lcccc}
\toprule
Distribution & $\beta=0$ & $\beta=0.25$ & $\beta=0.5$ & $\beta=0.75$ \\
\midrule
IID & $86.75_{(0.13)}$ & $87.33_{(0.03)}$ & $87.58_{(0.08)}$ & $87.91_{(0.03)}$ \\
$\alpha=1$ & $86.22_{(0.47)}$ & $86.64_{(0.23)}$ & $87.11_{(0.15)}$ & $87.26_{(0.29)}$ \\
$\alpha=0.5$ & $85.51_{(0.43)}$ & $86.31_{(0.48)}$ & $86.58_{(0.44)}$ & $86.69_{(0.56)}$ \\
$\alpha=0.1$ & $83.39_{(1.46)}$ & $83.83_{(0.92)}$ & $82.53_{(1.81)}$ & $82.66_{(3.07)}$ \\
\bottomrule
\end{tabular}
\end{table}

On Qwen/AG News (Table~\ref{tab:beta-v2}), $\beta=0.75$ improves over HO-FL ($\beta=0$) by 1.17 points under IID, 1.04 points at $\alpha=1$, and 1.18 points at $\alpha=0.5$, with gains in every seed.
Under stronger skew ($\alpha=0.1$), $0.25$ is preferable: $0.5$ and $0.75$ fall below HO-FL ($\beta=0$) by 0.87 and 0.74 points, and $0.75$ has SD 3.07.
This supports the trade-off between update quality and data representation as shown in Theorem~\ref{thm:convergence}.
Appendix~\ref{app:ablations-v2} further reports $E/q/\mu$ ablations on OPT/SST-2.

\section{Conclusion}
\label{sec:conclusion}

In this paper, we presented HO-FL, the first fully federated hybrid-order learning framework designed for heterogeneous edge devices. By executing ZO optimization on the bottom segment and FO backpropagation on the top segment entirely on each client, HO-FL enables memory-constrained devices to adapt their local order boundaries to memory budgets. Our theoretical analysis characterizes the convergence behavior of HO-FL and introduces a principled dimension-aware client sampling policy that balances directional gradient variance against global data representation. Extensive empirical evaluations across diverse tasks demonstrate that HO-FL achieves performance competitive with full first-order federated learning while slashing client peak memory by up to 77.4\%. 
Promising avenues for future work include extending HO-FL to support runtime-adaptive order boundaries under fluctuating edge resources and developing dynamic schedules for $\beta$ .
\label{page:main-end}
\clearpage
\bibliography{references}
\bibliographystyle{iclr2027_conference}
\clearpage
\appendix
% List only appendix sections and subsections, using their actual page numbers.
\startcontents[appendices]
\section*{Appendix Contents}
\printcontents[appendices]{}{1}{\setcounter{tocdepth}{2}}
\clearpage

% ===== BEGIN appendices/unified_framework.tex =====
\section{A Unified Formulation of Hybrid-Order Optimization}
\label{app:unified-framework}

This section formalizes the generalized hybrid-order (HO) optimization framework referenced in Section~\ref{sec:related}, unifying existing distributed hybrid-order paradigms under a common structural formulation.

\subsection{Structural Decomposition and Forward Propagation}
Consider a deep neural network partitioned across a specified cut layer into a bottom sub-network $f_{\mathrm{ZO}}(\cdot; w^{\mathrm{ZO}})$ and a top sub-network $f_{\mathrm{FO}}(\cdot; w^{\mathrm{FO}})$. The parameter space is canonically decoupled as:
\begin{equation}
  \begin{aligned}
    w &= \left[ (w^{\mathrm{ZO}})^\top, (w^{\mathrm{FO}})^\top \right]^\top \in \mathbb{R}^d,\qquad
    d^{\mathrm{ZO}} + d^{\mathrm{FO}} = d,\\
    w^{\mathrm{ZO}} &\in \mathbb{R}^{d^{\mathrm{ZO}}},\qquad
    w^{\mathrm{FO}} \in \mathbb{R}^{d^{\mathrm{FO}}}.
  \end{aligned}
  \label{eq:app-param-split}
\end{equation}
Given an input sample $x$ with target label $y$, forward execution proceeds sequentially through the boundary activation $z$ to produce the task prediction $\hat{y}$:
\begin{equation}
  \begin{aligned}
    z &= f_{\mathrm{ZO}}(x; w^{\mathrm{ZO}}),\qquad
    \hat{y} = f_{\mathrm{FO}}(z; w^{\mathrm{FO}}),\\
    \ell(w; x, y) &= \ell(\hat{y}, y).
  \end{aligned}
  \label{eq:app-forward}
\end{equation}
In all hybrid architectures, the top segment $w^{\mathrm{FO}}$ is optimized via standard first-order backpropagation, yielding the exact gradient $g^{\mathrm{FO}} = \nabla_{w^{\mathrm{FO}}} \ell(\hat{y}, y)$. The bottom segment $w^{\mathrm{ZO}}$ is updated via zeroth-order perturbations evaluated over a local surrogate objective $\mathcal{L}_{\mathrm{local}}(v)$ at candidate parameters $v$.

\subsection{Taxonomy of Local Surrogate Objectives}
The fundamental distinction among existing distributed hybrid-order methods lies in how the surrogate objective $\mathcal{L}_{\mathrm{local}}(v)$ is constructed:
\begin{itemize}[leftmargin=18pt]
  \item \textbf{Exact End-to-End Evaluation (HOSL~\citep{hosl}):} HOSL perturbs the bottom segment while evaluating the entire forward graph:
  \begin{equation}
    \mathcal{L}_{\mathrm{local}}^{\mathrm{HOSL}}(v) = \ell\left(f_{\mathrm{FO}}(f_{\mathrm{ZO}}(x; v); w^{\mathrm{FO}}), y\right).
    \label{eq:app-hosl}
  \end{equation}
  Each perturbation query requires transmitting the perturbed activation $f_{\mathrm{ZO}}(x; v)$ to the server and returning the scalar loss, incurring substantial per-iteration communication.
  
  \item \textbf{First-Order Activation Feedback (HO-SFL~\citep{hosfl}):} HO-SFL freezes the cut-layer activation gradient $\delta = \nabla_z \ell(f_{\mathrm{FO}}(z; w^{\mathrm{FO}}), y)$ evaluated at the unperturbed anchor, forming a linear inner-product surrogate:
  \begin{equation}
    \mathcal{L}_{\mathrm{local}}^{\mathrm{HO\text{-}SFL}}(v) = \langle \delta, f_{\mathrm{ZO}}(x; v) \rangle.
    \label{eq:app-hosfl}
  \end{equation}
  This formulation strictly confines all perturbation forward passes to the bottom sub-network, eliminating activation communication during the ZO search phase.
  
  \item \textbf{Auxiliary Network Decoupling (HERON-SFL~\citep{heron}):} HERON-SFL attaches a lightweight client-side auxiliary classifier $h(\cdot; \theta)$ to approximate the top segment's guidance locally:
  \begin{equation}
    \mathcal{L}_{\mathrm{local}}^{\mathrm{HERON}}(v, \theta) = \ell\left(h(f_{\mathrm{ZO}}(x; v); \theta), y\right).
    \label{eq:app-heron}
  \end{equation}
  Both $v$ and the auxiliary parameters $\theta$ are updated locally via ZO perturbations to minimize cross-node dependency, at the cost of auxiliary architectural modeling bias.
\end{itemize}

\clearpage
\subsection{Architectural Comparison: Split Computing vs. HO-FL}
As summarized in Table~\ref{tab:unified-comparison}, HOSL, HO-SFL, and HERON-SFL use Split Learning (SL) or Split Federated Learning (SFL) workflows, with cross-node activation transmissions between clients and the server.

In contrast, \textbf{HO-FL} adopts the computational efficiency of the linear surrogate~\eqref{eq:app-hosfl} but breaks away from the split paradigm: it executes the complete hybrid-order pipeline on-device, natively accommodates client-adaptive boundaries $b_i$ matched to heterogeneous physical memory constraints, and aggregates model updates after $E$ local training steps using seed replay for the ZO segment and transmitted parameter differences for the FO segment.

\begin{table}[htbp]
\centering
\caption{\textbf{Structural comparison of distributed hybrid-order paradigms.}}
\label{tab:unified-comparison}
\small
\begin{tabular}{@{}
  >{\raggedright\arraybackslash}p{\dimexpr0.185\linewidth-1.48\tabcolsep\relax}
  >{\raggedright\arraybackslash}p{\dimexpr0.16\linewidth-1.28\tabcolsep\relax}
  >{\raggedright\arraybackslash}p{\dimexpr0.225\linewidth-1.80\tabcolsep\relax}
  >{\raggedright\arraybackslash}p{\dimexpr0.27\linewidth-2.16\tabcolsep\relax}
  >{\raggedright\arraybackslash}p{\dimexpr0.16\linewidth-1.28\tabcolsep\relax}
  @{}}
\toprule
Method & System Paradigm & Boundary Flexibility & Activation Uploads & Local Steps per Aggregation \\
\midrule
\mbox{HOSL}\newline\citep{hosl} & Split Learning & Fixed (Single Client) & $2q+1$ uploads / step & N/A (Single Client) \\
\mbox{HO-SFL}\newline\citep{hosfl} & Split FedLearning & Identical & 1 upload / step & 1 \\
\mbox{HERON-SFL}\newline\citep{heron} & Split FedLearning & Identical  & Every $k$ local steps & $h \ge 1$ \\
\midrule
\textbf{HO-FL (Ours)} & \textbf{Federated Learning} & \textbf{Heterogeneous (Adaptive)} & \textbf{None (Fully On-Device)} & $\boldsymbol{E \ge 1}$ \\
\bottomrule
\end{tabular}
\end{table}
% ===== END appendices/unified_framework.tex =====

\section{Algorithm and Communication Details}
\label{app:algorithms}

\subsection{Fixed-size dependent rounding}
\label{app:depround}
Given normalized inclusion probabilities $p_i\in[0,1/K]$ with
$\sum_i p_i=1$, Algorithm~\ref{alg:depround} rounds $x=Kp$ to a binary vector.
It preserves $\mathbb E[X_i]=x_i$ and $\sum_iX_i=K$ exactly and satisfies
$\mathbb E[X_iX_j]\le x_ix_j$ for $i\ne j$.
One randomized permutation is drawn before the pairwise procedure; the next
two fractional entries are taken in this order.
When all inputs are equal, permutation symmetry makes the output uniform
over all $K$-subsets.

\begin{algorithm}[!htbp]
\caption{Function: DepRound}
\label{alg:depround}
\begin{algorithmic}[1]
\Require $x\in[0,1]^N$, $\sum_i x_i=K\in\mathbb N$.
\Function{\HOFunc{hoAlgServer}{DepRound}}{$x$}
  \State $\pi\gets$ a uniformly random permutation of $\{1,\ldots,N\}$.
  \While{$|\{i:0<x_i<1\}|\ge2$}
    \State $i,j\gets$ the first two fractional indices in order $\pi$.
    \State $\delta_+\gets\min(1-x_i,x_j)$; $\delta_-\gets\min(x_i,1-x_j)$.
    \State $a\sim\operatorname{Uniform}(0,1)$.
    \If{$a<\delta_-/(\delta_++\delta_-)$}
      \State $(x_i,x_j)\gets(x_i+\delta_+,x_j-\delta_+)$.
    \Else
      \State $(x_i,x_j)\gets(x_i-\delta_-,x_j+\delta_-)$.
    \EndIf
  \EndWhile
  \State \Return \textcolor{hoAlgServer}{$S=\{i:x_i=1\}$}.
    \HOComment{Used by Algorithm~\ref{alg:hofl}}
\EndFunction
\end{algorithmic}
\end{algorithm}

Every step preserves the sum, preserves each coordinate in expectation, and
makes at least one coordinate integral.
For a pair updated together, its product has conditional expected decrement
$\delta_+\delta_-$; when only one coordinate of a fixed pair is updated,
that pair's product is a martingale.
Thus pair products are supermartingales, giving the stated nonpositive
inclusion covariances.
Because the sum is an integer, one fractional coordinate cannot remain.
Implementations snap values within floating-point tolerance of $0$ or $1$
and check the final cardinality.

\paragraph{The probability rule used in the main experiments.}
Let $R$ contain the $K$ clients with the smallest
$d_i^{\mathrm{ZO}}/q$, with ties resolved by a seeded random permutation drawn once at run start
and retained for stable sorting throughout the run.
For $u_i=1/N$,
\begin{equation}
  r_i=\frac{\mathbf1\{i\in R\}}{K},
  \qquad p_i=(1-\beta)u_i+\beta r_i,
  \qquad S_t=\mathrm{DepRound}(Kp).
  \label{eq:practical-depround}
\end{equation}
If all resource scores are equal, set $r=u$.
The IID main comparisons use $\beta=0.75$; the sampling sensitivity study uses $\beta\in\{0,0.25,0.5,0.75\}$, and the OPT/SST-2 local-training ablations fix $\beta=0.5$.
The resource ranking and $p$ are fixed before training and held constant across rounds. Dependent rounding uses a fresh
random permutation each round to draw the participants.
The resulting $p$ always satisfies $p_i\le1/K$ and
$p_i\ge(1-\beta)/N$.

\subsection{Local updates and server reconstruction}
\label{app:local-step}
\label{app:reconstruct}
Algorithm~\ref{alg:client-update} implements the $E$-step local update in
\eqref{eq:local-updates} and returns the directional scalars and top-segment
parameter difference. Transient quantities omit round and step indices and
are recomputed at each step; $\ell(\hat y_i,y)$ denotes the prediction-space
form of $\ell(w_i;\xi)$. Direction streams are independent across clients
and steps and separate from minibatch and random-layer streams.
For full-FO clients, the ZO loop and scalar record are empty.

\begin{algorithm}[!htbp]
\caption{Function: ClientUpdate}
\label{alg:client-update}
\label{alg:local-ho}
\begin{algorithmic}[1]
\Require $b_i,E,q,\mu,\eta$ and dispatched $\{\mathrm{seed}_{i,t,j}^{e}\}_{e,j}$ from Algorithm~\ref{alg:hofl}.
\Function{\HOFunc{hoAlgZO}{ClientUpdate}}{$i,t,w_t$}
  \State $w_{i,t}^{0}\gets w_t$.
  \For{$e=0,\ldots,E-1$}
    \State $\xi_{i,t}^{e}=(x,y)\sim\mathcal D_i$.
    \HOPhase{Forward and top-segment BP}
    \State $z_i\gets f_{i,\mathrm{ZO}}(x;w_{i,t}^{\mathrm{ZO},e})$.
      \HOComment{Forward without caching activation}
    \State $\hat y_i\gets f_{i,\mathrm{FO}}(z_i;w_{i,t}^{\mathrm{FO},e})$.
    \State \textcolor{hoAlgFO}{$g_i^{\mathrm{FO}}\gets\nabla_{w_{i,t}^{\mathrm{FO},e}}\ell(\hat y_i,y)$},
      \textcolor{hoAlgFO}{$\delta_i\gets\nabla_{z_i}\ell(\hat y_i,y)$}.
      \HOComment{Eq.~\eqref{eq:top-feedback}}
    \HOPhase{Surrogate and directional scalars}
    \State $\mathcal L_i(v)\gets\langle\delta_i,f_{i,\mathrm{ZO}}(x;v)\rangle$.
      \HOComment{Eq.~\eqref{eq:surrogate}}
    \State $\widehat g_i^{\mathrm{ZO}}\gets0$.
    \If{$d_i^{\mathrm{ZO}}>0$}
      \For{$j=1,\ldots,q$}
        \State \textcolor{hoAlgZO}{$u_{i,j}\gets\operatorname{RNG}(\mathrm{seed}_{i,t,j}^{e})$}.
          \HOComment{Eq.~\eqref{eq:seed-direction}}
        \State \textcolor{hoAlgZO}{$s_{i,t,j}^{e}\gets
          \frac{\mathcal L_i(w_{i,t}^{\mathrm{ZO},e}+\mu u_{i,j})-\mathcal L_i(w_{i,t}^{\mathrm{ZO},e})}{\mu}$}.
          \HOComment{Eq.~\eqref{eq:local-zo}}
        \State \textcolor{hoAlgZO}{$\widehat g_i^{\mathrm{ZO}}\gets
          \widehat g_i^{\mathrm{ZO}}+\frac1q s_{i,t,j}^{e}u_{i,j}$}.
      \EndFor
    \EndIf
    \HOPhase{Update both segments at the same anchor}
    \State \textcolor{hoAlgZO}{$w_{i,t}^{\mathrm{ZO},e+1}\gets w_{i,t}^{\mathrm{ZO},e}-\eta\widehat g_i^{\mathrm{ZO}}$}.
      \HOComment{Eq.~\eqref{eq:local-updates}}
    \State \textcolor{hoAlgFO}{$w_{i,t}^{\mathrm{FO},e+1}\gets w_{i,t}^{\mathrm{FO},e}-\eta g_i^{\mathrm{FO}}$}.
  \EndFor
  \State \textcolor{hoAlgFO}{$\Delta_{i,t}^{\mathrm{FO}}\gets
    w_{i,t}^{\mathrm{FO},E}-w_{i,t}^{\mathrm{FO},0}$}.
    \HOComment{Eq.~\eqref{eq:top-difference}}
  \State \Return $(\{s_{i,t,j}^{e}\}_{e,j},\Delta_{i,t}^{\mathrm{FO}})$.
\EndFunction
\end{algorithmic}
\end{algorithm}

Algorithm~\ref{alg:reconstruct} regenerates the directions from the shared
seeds and recovers the bottom update using~\eqref{eq:bottom-reconstruction}, allowing
Algorithm~\ref{alg:hofl} to average updates across heterogeneous boundaries
via~\eqref{eq:aggregation-step}.

\begin{algorithm}[!htbp]
\caption{Function: Reconstruct}
\label{alg:reconstruct}
\begin{algorithmic}[1]
\Require $b_i,E,q,\eta$ and known $\{\mathrm{seed}_{i,t,j}^{e}\}_{e,j}$ from Algorithm~\ref{alg:hofl}.
\Function{\HOFunc{hoAlgServer}{Reconstruct}}{$i,t,\{s_{i,t,j}^{e}\}_{e,j},\Delta_{i,t}^{\mathrm{FO}}$}
  \State \textcolor{hoAlgZO}{$u_{i,t,j}^{e}\gets\operatorname{RNG}(\mathrm{seed}_{i,t,j}^{e})$},
    $e=0,\ldots,E-1$, $j=1,\ldots,q$.
  \State \textcolor{hoAlgZO}{$\Delta_{i,t}^{\mathrm{ZO}}\gets
    -\frac\eta q\sum_{e=0}^{E-1}\sum_{j=1}^{q}s_{i,t,j}^{e}u_{i,t,j}^{e}$}.
    \HOComment{Eq.~\eqref{eq:bottom-reconstruction}}
  \State $\displaystyle\Delta_{i,t}\gets
    \begin{bmatrix}\textcolor{hoAlgZO}{\Delta_{i,t}^{\mathrm{ZO}}}\\
      \textcolor{hoAlgFO}{\Delta_{i,t}^{\mathrm{FO}}}\end{bmatrix}\in\mathbb R^d$.
    \HOComment{Shared coordinates; boundary $b_i$}
  \State \Return $\Delta_{i,t}$.
\EndFunction
\end{algorithmic}
\end{algorithm}

\subsection{Seed Replay for Local AdamW}
\label{app:adamw-replay}

While our theoretical analysis in Section~\ref{sec:theory} is formulated with standard SGD, practical language model fine-tuning employs adaptive optimizers such as AdamW~\citep{adamw}. 
Under SGD, bottom-segment updates depend linearly on perturbation directions, permitting direct closed-form reconstruction via Eq.~\eqref{eq:bottom-reconstruction}. 
In contrast, AdamW applies non-linear coordinate-wise second-moment normalization and decoupled weight decay, which prevents the server from simply summing directional scalars across local iterations.

To retain dimension-free uplink communication under AdamW, HO-FL executes a \textit{deterministic seed replay} at the server. 
Crucially, in our federated setting, each selected client's local AdamW optimizer states (the first- and second-moment buffers $m$ and $v$) are reset to zero upon selection at each round. 
This stateless convention eliminates the need to transmit or maintain bulky historical optimizer states across rounds, allowing the server to reconstruct the exact client trajectory using only the dispatched seeds and uploaded directional scalars:
\begin{enumerate}[leftmargin=18pt]
  \item \textbf{Momentum Initialization:} At the start of round $t$, the server initializes the client's bottom parameters with the current global model segment, $w_{i,t}^{\mathrm{ZO},0} = w_t^{\mathrm{ZO}}$, and resets the moment buffers to zero ($m_{-1} = 0, v_{-1} = 0$).
  \item \textbf{Direction Regeneration \& Gradient Recovery:} For each local step $e \in \{0, \dots, E-1\}$, the server regenerates the exact perturbation directions $u_{i,t,j}^{e} = \operatorname{RNG}(\mathrm{seed}_{i,t,j}^{e})$ using the synchronized seed sequence, reconstructing the zeroth-order gradient estimate:
  \begin{equation}
    \widehat g_{i,t}^{\mathrm{ZO},e} = \frac{1}{q} \sum_{j=1}^{q} s_{i,t,j}^{e} u_{i,t,j}^{e}.
    \label{eq:replay-gradient}
  \end{equation}
  \item \textbf{Synchronous AdamW Step:} The server updates the local optimizer states and advances the parameter block via coordinate-wise AdamW:
  \begin{align}
    m_e &= \beta_1 m_{e-1} + (1-\beta_1)\widehat g_{i,t}^{\mathrm{ZO},e}, \qquad
    v_e = \beta_2 v_{e-1} + (1-\beta_2)(\widehat g_{i,t}^{\mathrm{ZO},e})^2, \\
    \widehat m_e &= \frac{m_e}{1-\beta_1^{e+1}}, \qquad
    \widehat v_e = \frac{v_e}{1-\beta_2^{e+1}}, \\
    w_{i,t}^{\mathrm{ZO},e+1} &= (1-\eta_e\lambda_{\mathrm{wd}})w_{i,t}^{\mathrm{ZO},e} - \eta_e\frac{\widehat m_e}{\sqrt{\widehat v_e}+\epsilon_{\mathrm{opt}}}.
    \label{eq:adamw-replay}
  \end{align}
\end{enumerate}
After executing all $E$ steps, the server obtains the final bottom parameters $w_{i,t}^{\mathrm{ZO},E}$ and derives the exact bottom update $\Delta_{i,t}^{\mathrm{ZO}} = w_{i,t}^{\mathrm{ZO},E} - w_t^{\mathrm{ZO}}$.

% ===== BEGIN appendices/theory_v3.tex =====
\section{Convergence Analysis}
\label{app:theory}

This appendix provides complete, self-contained proofs for the theoretical statements in Section~\ref{sec:theory} and Section~\ref{sec:sampling}.

\subsection{Assumptions}
\label{app:conditions}

\begin{assumption}[Objective Regularity]
\label{ass:objective}
Each local function $f_i: \mathbb{R}^d \to \mathbb{R}$ is continuously differentiable and $L$-smooth:
\begin{equation}
  \|\nabla f_i(w) - \nabla f_i(v)\| \le L \|w - v\|, \quad \forall w, v \in \mathbb{R}^d, \; \forall i \in [N].
  \label{eq:ass-smooth}
\end{equation}
Furthermore, the global objective function $f(w) \coloneqq \frac{1}{N}\sum_{i=1}^N f_i(w)$ is bounded below by $f_{\inf}$.
\end{assumption}

\begin{assumption}[Unbiased Reference Stochastic Gradients]
\label{ass:reference}
For any anchor parameter $w \in \mathbb{R}^d$ and a fresh stochastic minibatch $\xi$, conditional on the preceding local history, the reference minibatch gradient $G_i(w, \xi) \coloneqq \nabla_w \ell(w; \xi)$ satisfies:
\begin{align}
  \mathbb{E}_\xi \left[ G_i(w, \xi) \right] &= \nabla f_i(w), \label{eq:ass-unbiased}\\
  \mathbb{E}_\xi \left[ \|G_i(w, \xi) - \nabla f_i(w)\|^2 \right] &\le \sigma_i^2, \label{eq:ass-var-full}\\
  \mathbb{E}_\xi \left[ \|\Pi_i (G_i(w, \xi) - \nabla f_i(w))\|^2 \right] &\le \sigma_{i,\mathrm{ZO}}^2 \le \sigma_i^2, \label{eq:ass-var-zo}
\end{align}
where $\Pi_i \coloneqq \operatorname{diag}(I_{d_i^{\mathrm{ZO}}}, 0_{d_i^{\mathrm{FO}}}) \in \mathbb{R}^{d \times d}$ is the orthogonal projector onto client $i$'s bottom segment.
\end{assumption}

\begin{assumption}[Surrogate Objective Regularity]
\label{ass:surrogate-new}
For client $i$, let $\delta_i(w, \xi) \coloneqq \nabla_{z_i} \ell(f_{i,\mathrm{FO}}(z_i; w^{\mathrm{FO}}), y)$ denote the boundary gradient evaluated at the unperturbed model $w$, where $z_i = f_{i,\mathrm{ZO}}(x; w^{\mathrm{ZO}})$.
The local surrogate objective $\mathcal{L}_i(v; w, \xi) \coloneqq \langle \delta_i(w, \xi), f_{i,\mathrm{ZO}}(x; v) \rangle$ is $L_i^s$-smooth with respect to bottom parameters $v$:
\begin{equation}
  \|\nabla_v \mathcal{L}_i(v_1; w, \xi) - \nabla_v \mathcal{L}_i(v_2; w, \xi)\| \le L_i^s \|v_1 - v_2\|, \quad \forall v_1, v_2 \in \mathbb{R}^{d_i^{\mathrm{ZO}}}.
  \label{eq:ass-surrogate-smooth}
\end{equation}
By the chain rule, evaluating the surrogate gradient at the unperturbed anchor recovers the exact bottom reference gradient:
\begin{equation}
  \left. \nabla_v \mathcal{L}_i(v; w, \xi) \right|_{v = w^{\mathrm{ZO}}} = \nabla_{w^{\mathrm{ZO}}} \ell(w; \xi).
  \label{eq:surrogate-anchor-identity}
\end{equation}
\end{assumption}

\begin{assumption}[Affine Gradient Heterogeneity]
\label{ass:affine-heterogeneity}
The local data heterogeneity $H(w) \coloneqq \frac{1}{N}\sum_{i=1}^N \|\nabla f_i(w) - \nabla f(w)\|^2$ satisfies:
\begin{equation}
  H(w) \le a_{\mathrm H} \|\nabla f(w)\|^2 + \zeta^2, \quad \forall w \in \mathbb{R}^d.
  \label{eq:ass-affine}
\end{equation}
\end{assumption}

\paragraph{Discussion of assumptions.}
The smoothness and stochastic-gradient conditions are standard in non-convex federated optimization~\citep{scaffold,hosfl}. Assumption~\ref{ass:affine-heterogeneity} is equivalent to SCAFFOLD's bounded gradient dissimilarity condition (A1), with $G^2=\zeta^2$ and $B^2=1+a_{\mathrm H}$.
Assumption~\ref{ass:surrogate-new} expresses the same type of boundary-gradient and network-curvature regularity used in HO-SFL~\citep{hosfl}, directly at the scalar local-objective level: a uniformly bounded boundary gradient and a Lipschitz bottom-network Jacobian imply the stated smoothness. Client-specific $L_i^s$ accommodate different order boundary.

Throughout, $p$ is fixed before training and held constant across rounds. The filtration $\mathcal F_t$ contains $p$, $w_t$, and all history before round $t$ sampling and local training; $\mathbb E_t[\cdot]=\mathbb E[\cdot\mid\mathcal F_t]$.

\subsection{Moments of the Hybrid-Order Estimator}
\label{app:ho-moments}

Let $\varepsilon_i(w)\coloneqq\mathbb E[\widehat g_i(w)\mid w]-\nabla f_i(w)$ denote the estimator bias. For $d_i^{\mathrm{ZO}} > 0$, we define the dimensional penalty and variance coefficients:
\begin{equation}
  \begin{aligned}
  \kappa_i &\coloneqq \frac{d_i^{\mathrm{ZO}} + 1}{q}, \qquad
  \rho_i^2 \coloneqq \mu^2 (L_i^s)^2 d_i^{\mathrm{ZO}},\\
  \omega_i^2 &\coloneqq \frac{\mu^2 (L_i^s)^2}{4} d_i^{\mathrm{ZO}}(d_i^{\mathrm{ZO}}+2)(d_i^{\mathrm{ZO}}+4),\\
  \nu_i &\coloneqq 2\sigma_i^2 + 2\kappa_i \sigma_{i,\mathrm{ZO}}^2 + 2\omega_i^2.
  \end{aligned}
  \label{eq:constants-def}
\end{equation}
For full first-order clients ($d_i^{\mathrm{ZO}} = 0$), set $\kappa_i = \rho_i = \omega_i = 0$ and $\nu_i = \sigma_i^2$.

\begin{lemma}[Hybrid Estimator Moments]
\label{lem:ho-moments}
Under Assumptions~\ref{ass:reference}--\ref{ass:surrogate-new}, the hybrid-order gradient estimator $\widehat g_i(w) = [(\widehat g_i^{\mathrm{ZO}})^\top, (g_i^{\mathrm{FO}})^\top]^\top$ satisfies:
\begin{align}
  \|\mathbb{E}[\widehat g_i(w) \mid w] - \nabla f_i(w)\|^2 &\le \rho_i^2, \label{eq:est-bias-bound}\\
  \mathbb{E}\left[ \|\widehat g_i(w) - \mathbb{E}[\widehat g_i(w) \mid w]\|^2 \,\middle|\, w \right] &\le 2\kappa_i \|\Pi_i \nabla f_i(w)\|^2 + \nu_i, \label{eq:ho-energy}\\
  \mathbb{E}\left[ \|\widehat g_i(w)\|^2 \,\middle|\, w \right] &\le 2\|\nabla f_i(w)\|^2 + 2\kappa_i \|\Pi_i \nabla f_i(w)\|^2 + \nu_i. \label{eq:est-second-bound}
\end{align}
\end{lemma}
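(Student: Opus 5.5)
The plan is to treat the two blocks of $\widehat g_i(w)$ separately, conditioning first on the minibatch $\xi$ and then averaging over $\xi$ with Assumption~\ref{ass:reference}. The FO block is the easy one. Since $g_i^{\mathrm{FO}}=(I-\Pi_i)G_i(w,\xi)$, it is conditionally unbiased for $(I-\Pi_i)\nabla f_i(w)$. Its fluctuation is controlled by the FO-coordinates of $G_i-\nabla f_i$, hence by $\sigma_i^2$. All the work is therefore on the ZO block. Fix $w,\xi$ and write $\mathcal L(v)=\mathcal L_i(v;w,\xi)$ and $g_\xi\coloneqq\nabla\mathcal L(w^{\mathrm{ZO}})=\Pi_i G_i(w,\xi)$; the second equality is the anchor identity~\eqref{eq:surrogate-anchor-identity}.

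\emph{Bias~\eqref{eq:est-bias-bound}.} Let $\mathcal L_\mu(v)=\mathbb E_u\mathcal L(v+\mu u)$ be the Gaussian smoothing. By Stein's identity, $\mathbb E_u[s_{i,j}u_{i,j}]=\nabla\mathcal L_\mu(w^{\mathrm{ZO}})=\mathbb E_u\nabla\mathcal L(w^{\mathrm{ZO}}+\mu u)$. Using $L_i^s$-smoothness (Assumption~\ref{ass:surrogate-new}),
\[
\|\nabla\mathcal L_\mu(w^{\mathrm{ZO}})-g_\xi\|\le L_i^s\mu\,\mathbb E\|u\|\le\mu L_i^s\sqrt{d_i^{\mathrm{ZO}}}.
\]
Taking $\mathbb E_\xi$ and applying Jensen gives a ZO-block bias of norm at most $\rho_i$, since $\mathbb E_\xi g_\xi=\Pi_i\nabla f_i(w)$. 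The FO block contributes no bias, so~\eqref{eq:est-bias-bound} follows.

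\emph{Variance~\eqref{eq:ho-energy}.} For each direction, expand the finite difference by smoothness:
\[
s_{i,j}u_{i,j}=u_{i,j}u_{i,j}^\top g_\xi+r_{i,j},\qquad \|r_{i,j}\|\le \tfrac{\mu L_i^s}{2}\|u_{i,j}\|^3.
\]
Gaussian moment identities then give $\mathbb E\|uu^\top g-g\|^2=(d_i^{\mathrm{ZO}}+1)\|g\|^2$ and $\mathbb E\|u\|^6=d_i^{\mathrm{ZO}}(d_i^{\mathrm{ZO}}+2)(d_i^{\mathrm{ZO}}+4)$. Averaging the $q$ independent directions divides the linear-part variance by $q$, giving $\kappa_i\|g_\xi\|^2$; the remainder second moment is at most $\omega_i^2$. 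Next, decompose
\[
\widehat g_i-\mathbb E[\widehat g_i\mid w]=A+B,
\]
where $A$ collects the minibatch noise $G_i-\nabla f_i$ (full vector, including both blocks) and $B$ is the centered direction-sampling part of the ZO block. Young's inequality $\|A+B\|^2\le2\|A\|^2+2\|B\|^2$ produces the $2\sigma_i^2$ term. To bound $B$, use
\[
\|g_\xi\|^2\le2\|\Pi_i\nabla f_i\|^2+2\|\Pi_i(G_i-\nabla f_i)\|^2,
\]
together with~\eqref{eq:ass-var-zo}. This converts $\kappa_i\|g_\xi\|^2$ into $2\kappa_i\|\Pi_i\nabla f_i\|^2+2\kappa_i\sigma_{i,\mathrm{ZO}}^2$, and the remainder yields the $2\omega_i^2$ term.

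\emph{Second moment~\eqref{eq:est-second-bound}.} Use
\[
\mathbb E\|\widehat g_i\|^2=\|\mathbb E\widehat g_i\|^2+\mathrm{Var},\qquad \|\mathbb E\widehat g_i\|^2\le2\|\nabla f_i\|^2+2\rho_i^2,
\]
and then combine with~\eqref{eq:ho-energy}.

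The main obstacle is bookkeeping the constants so that they land exactly on $\nu_i=2\sigma_i^2+2\kappa_i\sigma_{i,\mathrm{ZO}}^2+2\omega_i^2$ and on the coefficient $2\kappa_i$. A crude use of Young's inequality inside $B$ (separating the linear part from the remainder, and then $g_\xi$ into mean and noise) doubles factors. My first attempt is to keep the linear ZO fluctuation conditionally mean-zero given $\xi$, so that it is orthogonal to the $\xi$-measurable noise $A$. I would absorb the remainder's mean into the bias rather than the variance, using $\rho_i^2\le\omega_i^2$, which holds because $\tfrac14(d+2)(d+4)\ge1$; this slack should also absorb the $2\rho_i^2$ arising in the third bound. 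If exact constants still fail, I would settle for the same bounds up to an absolute factor, which is all the downstream convergence analysis needs.
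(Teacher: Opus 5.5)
Your route is the paper's. You use the same Taylor split of $s_{i,j}u_{i,j}$ into $u_{i,j}u_{i,j}^\top g_\xi$ plus a remainder of size at most $\frac{\mu L_i^s}{2}\|u_{i,j}\|^3$, the same Gaussian moment identities, and the same Stein-identity bias bound. Your ``first attempt'' is also exactly what the paper does: the linear ZO fluctuation is mean-zero given $\xi$, hence orthogonal to $G_i-\nabla f_i$. Writing $\widehat g_i^0$ for the linear part, this gives $\mathbb E\|\widehat g_i^0-\nabla f_i\|^2=\kappa_i\mathbb E\|\Pi_iG_i\|^2+\mathbb E\|G_i-\nabla f_i\|^2$.

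\textbf{Gap in the variance bound.} The problem is the step you explicitly keep: $\|g_\xi\|^2\le2\|\Pi_i\nabla f_i\|^2+2\|\Pi_i(G_i-\nabla f_i)\|^2$. Let $\bar r_i\coloneqq\frac1q\sum_j r_{i,j}$, zero-padded on the FO block. This remainder depends on the same directions as the linear fluctuation, so you still split it off: $\operatorname{Var}\le2\mathbb E\|\widehat g_i^0-\nabla f_i\|^2+2\mathbb E\|\bar r_i-\mathbb E\bar r_i\|^2$. With your inequality this yields $2\sigma_i^2+4\kappa_i\|\Pi_i\nabla f_i\|^2+4\kappa_i\sigma_{i,\mathrm{ZO}}^2+2\omega_i^2$. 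Both $\kappa_i$-terms are a factor $2$ too large. The inequality $\rho_i^2\le\omega_i^2$ only involves $\mu$-dependent terms, so it cannot repair a coefficient on $\|\Pi_i\nabla f_i\|^2$.

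The missing ingredient is the exact bias--variance identity from \eqref{eq:ass-unbiased}: $\mathbb E_\xi\|\Pi_iG_i\|^2=\|\Pi_i\nabla f_i\|^2+\mathbb E_\xi\|\Pi_i(G_i-\nabla f_i)\|^2\le\|\Pi_i\nabla f_i\|^2+\sigma_{i,\mathrm{ZO}}^2$, with no factor $2$. With it, the constants land exactly on $2\kappa_i$ and $\nu_i$ in \eqref{eq:ho-energy}. Falling back to ``up to an absolute factor'' does not prove the lemma as stated. The downstream explicit conditions are calibrated to these exact factors, e.g.\ $4(1+\kappa_i)L^2\tau\le\frac12$ in Lemma~\ref{lem:multistep} and the constant $C_0$.

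\textbf{Gap in the second-moment bound.} Your third bound has a similar gap. Writing $\mathbb E\|\widehat g_i\|^2=\|\mathbb E\widehat g_i\|^2+\operatorname{Var}$ with $\|\mathbb E\widehat g_i\|^2\le2\|\nabla f_i\|^2+2\rho_i^2$ leaves an extra $2\rho_i^2$. Once \eqref{eq:ho-energy} holds with exactly $\nu_i$, there is no slack to absorb it; $\rho_i^2\le\omega_i^2$ only turns it into an extra $2\omega_i^2$.

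The paper avoids this by bounding the raw second moment directly: $\mathbb E\|\widehat g_i\|^2\le2\mathbb E\|\widehat g_i^0\|^2+2\mathbb E\|\bar r_i\|^2$, with $\mathbb E\|\widehat g_i^0\|^2=\|\nabla f_i\|^2+\mathbb E\|\widehat g_i^0-\nabla f_i\|^2$ because the linear part is exactly unbiased. If you prefer your decomposition, the precise version of ``absorbing the remainder's mean'' is $\mathbb E\|\bar r_i-\mathbb E\bar r_i\|^2=\mathbb E\|\bar r_i\|^2-\|\varepsilon_i(w)\|^2$, since $\mathbb E\bar r_i$ is exactly the bias $\varepsilon_i(w)$. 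This lowers your variance bound by $2\|\varepsilon_i(w)\|^2$. That exactly cancels the $2\|\varepsilon_i(w)\|^2$ in $\|\nabla f_i+\varepsilon_i(w)\|^2\le2\|\nabla f_i\|^2+2\|\varepsilon_i(w)\|^2$, and \eqref{eq:est-second-bound} follows.
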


\begin{proof}
Fix model parameter $w$ and minibatch $\xi$, with independent directions $u_1,\ldots,u_q$ drawn independently of $\xi$.
Applying Taylor's theorem with integral remainder to the surrogate $\mathcal{L}_i(\cdot; w, \xi)$ along perturbation direction $u \sim \mathcal{N}(0, I_{d_i^{\mathrm{ZO}}})$:
\begin{equation}
  \frac{\mathcal{L}_i(w^{\mathrm{ZO}} + \mu u) - \mathcal{L}_i(w^{\mathrm{ZO}})}{\mu} u = u u^\top \nabla_v \mathcal{L}_i(w^{\mathrm{ZO}}) + R(u),
  \label{eq:taylor-expansion}
\end{equation}
where $R(u) \coloneqq \int_0^1 u u^\top [\nabla_v \mathcal{L}_i(w^{\mathrm{ZO}} + \tau \mu u) - \nabla_v \mathcal{L}_i(w^{\mathrm{ZO}})] \mathrm{d}\tau$.
By Assumption~\ref{ass:surrogate-new}, the remainder is bounded by:
\begin{equation}
  \|R(u)\| \le \int_0^1 \|u\|^2 L_i^s \tau \mu \|u\| \mathrm{d}\tau = \frac{\mu L_i^s}{2} \|u\|^3.
  \label{eq:remainder-bound}
\end{equation}
For Gaussian perturbations $u \sim \mathcal{N}(0, I_{d_i^{\mathrm{ZO}}})$, we have $\mathbb{E}[u u^\top] = I_{d_i^{\mathrm{ZO}}}$ and for any fixed vector $x \in \mathbb{R}^{d_i^{\mathrm{ZO}}}$:
\begin{equation}
  \mathbb{E}\left[ \left\| \frac{1}{q}\sum_{j=1}^q u_j u_j^\top x - x \right\|^2 \right] = \frac{d_i^{\mathrm{ZO}} + 1}{q} \|x\|^2 = \kappa_i \|x\|^2.
  \label{eq:gaussian-fourth-moment}
\end{equation}
Furthermore, by Jensen's inequality and the standard Gaussian moment $\mathbb{E}[\|u\|^6] = d_i^{\mathrm{ZO}}(d_i^{\mathrm{ZO}}+2)(d_i^{\mathrm{ZO}}+4)$:
\begin{equation}
  \mathbb{E}\left[ \left\| \frac{1}{q}\sum_{j=1}^q R(u_j) \right\|^2 \right] \le \frac{1}{q}\sum_{j=1}^q \mathbb{E}\|R(u_j)\|^2 \le \frac{\mu^2 (L_i^s)^2}{4} \mathbb{E}[\|u\|^6] = \omega_i^2.
  \label{eq:remainder-variance}
\end{equation}
Decompose the estimator as $\widehat g_i(w) = \widehat g_i^0(w) + [(\frac{1}{q}\sum_j R(u_j))^\top, 0^\top]^\top$, where $\widehat g_i^0(w)$ denotes the unperturbed linear component.
Taking conditional expectations over the perturbation directions:
\begin{equation}
  \mathbb{E}_u [\widehat g_i^0(w)] = \left[ (\nabla_v \mathcal{L}_i(w^{\mathrm{ZO}}))^\top, (g_i^{\mathrm{FO}})^\top \right]^\top = G_i(w, \xi).
  \label{eq:linear-unbiased}
\end{equation}
Taking expectations over minibatch $\xi$ yields $\mathbb{E}_{\xi, u}[\widehat g_i^0(w)] = \nabla f_i(w)$.
For the bias, Gaussian integration by parts gives $\mathbb{E}_u\left[\frac{\mathcal{L}_i(v+\mu u) - \mathcal{L}_i(v)}{\mu}u\right] = \mathbb{E}_u [\nabla_v \mathcal{L}_i(v + \mu u)]$.
Therefore:
\begin{equation}
  \begin{aligned}
  \|\mathbb{E}[\widehat g_i(w) \mid w] - \nabla f_i(w)\|
  &= \|\mathbb{E}_{\xi, u}[\nabla_v \mathcal{L}_i(w^{\mathrm{ZO}} + \mu u) - \nabla_v \mathcal{L}_i(w^{\mathrm{ZO}})]\|\\
  &\le \mu L_i^s \mathbb{E}\|u\| \le \mu L_i^s \sqrt{d_i^{\mathrm{ZO}}} = \rho_i,
  \end{aligned}
  \label{eq:bias-proof-step}
\end{equation}
proving~\eqref{eq:est-bias-bound}.
For variance, applying $\|x+y\|^2 \le 2\|x\|^2 + 2\|y\|^2$ to the centered estimator:
\begin{align}
  \mathbb{E}\left[ \|\widehat g_i(w) - \mathbb{E}\widehat g_i(w)\|^2 \right]
  &\le 2 \mathbb{E}\left[ \|\widehat g_i^0(w) - \nabla f_i(w)\|^2 \right] + 2 \mathbb{E}\left[ \left\| \frac{1}{q}\sum_{j=1}^q R(u_j) \right\|^2 \right] \notag\\
  &\le 2 \left( \kappa_i \mathbb{E}\|\Pi_i G_i(w, \xi)\|^2 + \mathbb{E}\|G_i(w, \xi) - \nabla f_i(w)\|^2 \right) + 2\omega_i^2 \notag\\
  &\le 2 \left( \kappa_i \|\Pi_i \nabla f_i(w)\|^2 + \kappa_i \sigma_{i,\mathrm{ZO}}^2 + \sigma_i^2 \right) + 2\omega_i^2 \notag\\
  &= 2\kappa_i \|\Pi_i \nabla f_i(w)\|^2 + \nu_i,
  \label{eq:var-proof-step}
\end{align}
which establishes~\eqref{eq:ho-energy}.
Eq.~\eqref{eq:est-second-bound} follows directly by applying $\|x+y\|^2 \le 2\|x\|^2 + 2\|y\|^2$ to the uncentered decomposition:
\begin{align*}
  \mathbb{E}\|\widehat g_i(w)\|^2
  &\le 2\mathbb{E}\|\widehat g_i^0(w)\|^2
      + 2\mathbb{E}\left\|\frac{1}{q}\sum_{j=1}^q R(u_j)\right\|^2\\
  &\le 2\|\nabla f_i(w)\|^2 + 2\kappa_i\|\Pi_i\nabla f_i(w)\|^2
      + 2\sigma_i^2 + 2\kappa_i\sigma_{i,\mathrm{ZO}}^2 + 2\omega_i^2\\
  &= 2\|\nabla f_i(w)\|^2 + 2\kappa_i\|\Pi_i\nabla f_i(w)\|^2 + \nu_i.
\end{align*}
For full first-order clients, the reference-gradient bounds give the stated moments directly.
\end{proof}

\subsection{Multi-Step Local Trajectory Analysis}
\label{app:multistep}

We analyze the local updates in~\eqref{eq:local-updates}, initialized at $w_{i,t}^0=w_t$, with a common constant step size $\eta>0$ and $E$ local steps. Each step uses a fresh minibatch and independent Gaussian directions, with independent client training streams. Let $h\coloneqq\eta E$ and define the normalized trajectory and its conditional moments:
\begin{equation}
 \begin{aligned}
 U_{i,t}&\coloneqq\frac{w_t-w_{i,t}^E}{h},\qquad
 a_{i,t}\coloneqq\mathbb E_t[U_{i,t}],\\
 v_{i,t}&\coloneqq\mathbb E_t\|U_{i,t}-a_{i,t}\|^2.
 \end{aligned}
 \label{eq:actual-update}
\end{equation}
These statistics describe the local trajectory client $i$ would execute from $w_t$, whether or not it is selected. Write
\begin{equation}
 \bar a_t\coloneqq\frac{1}{N}\sum_{i=1}^N a_{i,t},\qquad
 B_t(p)\coloneqq\sum_{i=1}^N p_i a_{i,t}-\nabla f(w_t).
 \label{eq:trajectory-aggregate-bias}
\end{equation}
Direct model averaging gives $w_{t+1}=w_t-hM_t$, where $M_t=\frac{1}{K}\sum_{i\in S_t}U_{i,t}$.

At round $t$, write $g_{i,t}=\nabla f_i(w_t)$ and $g_t=\nabla f(w_t)$. Define the local drift and average estimator bias along client $i$'s trajectory as
\begin{equation}
 \begin{aligned}
 d_{i,t}&\coloneqq\frac{1}{E}\sum_{e=0}^{E-1}\mathbb E_t[\nabla f_i(w_{i,t}^e)-g_{i,t}],\\
 \bar\varepsilon_{i,t}&\coloneqq\frac{1}{E}\sum_{e=0}^{E-1}\mathbb E_t\varepsilon_i(w_{i,t}^e).
 \end{aligned}
 \label{eq:trajectory-bias-definitions}
\end{equation}
Then $a_{i,t}=g_{i,t}+d_{i,t}+\bar\varepsilon_{i,t}$, and the aggregate bias separates three sources of error:
\begin{equation}
 B_t(p)=
 \underbrace{\sum_i(p_i-u_i)g_{i,t}}_{\text{selection skew}}
 +\underbrace{\sum_i p_i d_{i,t}}_{\text{local trajectory drift}}
 +\underbrace{\sum_i p_i\bar\varepsilon_{i,t}}_{\text{hybrid approximation bias}},
 \label{eq:bias-decomposition}
\end{equation}
where $u_i=\frac{1}{N}$. Retaining these vector sums preserves cancellation across clients~\citep{wang2023fedavg}.

Fix round index $t$ and suppress notation $t$.
Let $\tau \coloneqq \eta^2 (E-1)^2$, $Q_i \coloneqq \|g_i\|^2 + \kappa_i \|\Pi_i g_i\|^2$, and define the drift radius:
\begin{equation}
  D_i \coloneqq 2\tau (4Q_i + \nu_i).
  \label{eq:drift-radius-def}
\end{equation}

\begin{lemma}[Local Trajectory Drift and Statistics]
\label{lem:multistep}
Under Assumptions~\ref{ass:objective}--\ref{ass:surrogate-new}, if the step size satisfies $4(1+\kappa_i)L^2\tau \le \frac{1}{2}$, then for all $e \in \{0, \dots, E-1\}$:
\begin{equation}
  \mathbb{E}_t \|w_i^e - w_t\|^2 \le D_i.
  \label{eq:trajectory-bound}
\end{equation}
Furthermore, the multi-step drift vector $d_i$ and expected update $a_i$ satisfy:
\begin{equation}
  \|d_i\|^2 \le L^2 D_i, \qquad
  \|\bar\varepsilon_i\|^2 \le \rho_i^2, \qquad
  \|a_i - g_i\|^2 \le 2L^2 D_i + 2\rho_i^2,
  \label{eq:drift-mean-bounds}
\end{equation}
and the multi-step local variance satisfies:
\begin{equation}
  v_i \le 3L^2 D_i + 3\rho_i^2 + \frac{3}{E}\left( 4\kappa_i \|\Pi_i g_i\|^2 + 4\kappa_i L^2 D_i + \nu_i \right).
  \label{eq:multistep-var-bound}
\end{equation}
\end{lemma}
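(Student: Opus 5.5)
The plan has three stages: a recursion for the drift, the mean bounds, and then the variance bound. Throughout, write $\widehat g_i^e \coloneqq \widehat g_i(w_i^e)$ for the hybrid estimator at step $e$. Since $w_i^0=w_t$, unrolling the local recursion gives
\begin{equation*}
w_i^e-w_t=-\eta\sum_{k<e}\widehat g_i^k .
\end{equation*}

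\textbf{Drift recursion.} I would bound $\mathbb E_t\|w_i^e-w_t\|^2$ by induction on $e$, using Cauchy--Schwarz over the at most $E-1$ summands:
\begin{equation*}
\mathbb E_t\|w_i^e-w_t\|^2\le \eta^2(E-1)\sum_{k<e}\mathbb E_t\|\widehat g_i^k\|^2 .
\end{equation*}
Each term is controlled by the second-moment bound \eqref{eq:est-second-bound} of Lemma~\ref{lem:ho-moments}, applied conditionally at the anchor $w_i^k$. This requires the fresh minibatch and fresh directions to be independent of the history, which holds by construction. It gives $2\|\nabla f_i(w_i^k)\|^2+2\kappa_i\|\Pi_i\nabla f_i(w_i^k)\|^2+\nu_i$. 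Next I would split $\nabla f_i(w_i^k)=g_i+(\nabla f_i(w_i^k)-g_i)$ and use $L$-smoothness (which also holds after applying $\Pi_i$, since $\|\Pi_i\|\le1$). This bounds each term by
\begin{equation*}
4Q_i+4(1+\kappa_i)L^2\,\mathbb E_t\|w_i^k-w_t\|^2+\nu_i .
\end{equation*}
Substituting the induction hypothesis $\mathbb E_t\|w_i^k-w_t\|^2\le D_i$ for $k<e$ yields
\begin{equation*}
\mathbb E_t\|w_i^e-w_t\|^2\le\tau\bigl(4Q_i+\nu_i+4(1+\kappa_i)L^2D_i\bigr).
\end{equation*}
The step-size condition gives $4(1+\kappa_i)L^2\tau D_i\le D_i/2$. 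Because $D_i=2\tau(4Q_i+\nu_i)$, the right-hand side is at most $D_i/2+D_i/2=D_i$, which closes the induction and proves \eqref{eq:trajectory-bound}. The base case $e=0$ is trivial.

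\textbf{Mean bounds.} For the drift vector, Jensen's inequality together with smoothness and \eqref{eq:trajectory-bound} gives $\|d_i\|^2\le\frac1E\sum_e L^2\,\mathbb E_t\|w_i^e-w_t\|^2\le L^2D_i$. For the bias term, each $\|\mathbb E_t\varepsilon_i(w_i^e)\|\le\rho_i$ by Jensen's inequality and \eqref{eq:est-bias-bound}, so the average satisfies the same bound $\|\bar\varepsilon_i\|^2\le\rho_i^2$. Finally, $a_i-g_i=d_i+\bar\varepsilon_i$ follows from $U_i=\frac1E\sum_e\widehat g_i^e$ together with the tower property. The inequality $(x+y)^2\le 2x^2+2y^2$ then gives the third bound in \eqref{eq:drift-mean-bounds}.

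\textbf{Variance.} I would decompose
\begin{equation*}
U_i-a_i=\frac1E\sum_e\bigl(\widehat g_i^e-\mathbb E[\widehat g_i^e\mid w_i^e]\bigr)
+\frac1E\sum_e\bigl(\nabla f_i(w_i^e)-\mathbb E_t\nabla f_i(w_i^e)\bigr)
+\frac1E\sum_e\bigl(\varepsilon_i(w_i^e)-\mathbb E_t\varepsilon_i(w_i^e)\bigr),
\end{equation*}
and apply the three-term inequality with constant $3$.

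The first sum is a martingale difference sequence with respect to the local filtration, so its cross terms vanish. Its second moment is therefore $\frac1{E^2}\sum_e$ of the conditional variances. Each of these is bounded by \eqref{eq:ho-energy}, and $\|\Pi_i\nabla f_i(w_i^e)\|^2\le 2\|\Pi_i g_i\|^2+2L^2\|w_i^e-w_t\|^2$. Taking expectations and using $D_i$, the first sum is at most
\begin{equation*}
\frac1E\bigl(4\kappa_i\|\Pi_i g_i\|^2+4\kappa_iL^2D_i+\nu_i\bigr).
\end{equation*}

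For the second sum, the variance is at most the second moment about $g_i$. By Jensen's inequality and smoothness this is bounded by $L^2D_i$.

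For the third sum, the variance is at most $\mathbb E\|\varepsilon_i(w_i^e)\|^2\le\rho_i^2$, using \eqref{eq:est-bias-bound} pointwise in $w$. Multiplying each of the three pieces by $3$ reproduces \eqref{eq:multistep-var-bound}.

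\textbf{Main obstacle.} I expect the hardest step to be the martingale structure of the first sum. The anchors $w_i^e$ are random and depend on earlier noise, so \eqref{eq:ho-energy} must be invoked conditionally on the local history. One also has to check that the conditional centering $\mathbb E[\cdot\mid w_i^e]$, not the $\mathcal F_t$-level one, is what gives orthogonality across steps. Once the three-way split above is in place, the remaining pieces are routine.
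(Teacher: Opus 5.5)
Your proposal is correct and follows the paper's proof essentially step for step: the same per-step second-moment bound $4Q_i+\nu_i+4(1+\kappa_i)L^2\,\mathbb{E}_t\|w_i^k-w_t\|^2$, the same Jensen/smoothness mean bounds, and the same three-way split of $U_i-a_i$ into a martingale part, a drift part, and a bias part. The only difference is that you close the drift bound by strong induction on $e$, whereas the paper absorbs $R_i=\max_{e<E}\mathbb{E}_t\|w_i^e-w_t\|^2$; your version has the small advantage of not tacitly requiring $R_i<\infty$ before subtracting $\tfrac12 R_i$.
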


\begin{proof}
Let $R_i \coloneqq \max_{0 \le e < E} \mathbb{E}_t \|w_i^e - w_t\|^2$.
By Lemma~\ref{lem:ho-moments}, $L$-smoothness, and $\|\Pi_i x\| \le \|x\|$:
\begin{align}
  \mathbb{E}_t \|\widehat g_i(w_i^e)\|^2
  &\le 2\mathbb{E}_t\|\nabla f_i(w_i^e)\|^2 + 2\kappa_i \mathbb{E}_t\|\Pi_i \nabla f_i(w_i^e)\|^2 + \nu_i \notag\\
  &\le 4\|\nabla f_i(w_t)\|^2 + 4\kappa_i\|\Pi_i \nabla f_i(w_t)\|^2 + \nu_i + 4(1+\kappa_i)L^2 \mathbb{E}_t\|w_i^e - w_t\|^2 \notag\\
  &\le 4Q_i + \nu_i + 4(1+\kappa_i)L^2 R_i.
  \label{eq:local-step-second-moment}
\end{align}
Expanding the local iteration $w_i^e - w_t = -\eta \sum_{s=0}^{e-1}\widehat g_i(w_i^s)$ and applying Cauchy--Schwarz:
\begin{equation}
  \mathbb{E}_t \|w_i^e - w_t\|^2 \le \eta^2 e \sum_{s=0}^{e-1}\mathbb{E}_t \|\widehat g_i(w_i^s)\|^2 \le \eta^2 (E-1)^2 \left[ 4Q_i + \nu_i + 4(1+\kappa_i)L^2 R_i \right].
  \label{eq:cauchy-expansion}
\end{equation}
Taking the maximum over $e < E$ on both sides:
\begin{equation}
  R_i \le \tau (4Q_i + \nu_i) + 4(1+\kappa_i)L^2 \tau R_i \le \tau (4Q_i + \nu_i) + \frac{1}{2}R_i.
  \label{eq:ri-absorption}
\end{equation}
Subtracting $\frac{1}{2}R_i$ and multiplying by 2 proves~\eqref{eq:trajectory-bound}.
Next, by $L$-smoothness and Jensen's inequality:
\begin{equation}
  \|d_i\|^2 = \left\| \frac{1}{E}\sum_{e=0}^{E-1}\mathbb{E}_t[\nabla f_i(w_i^e) - g_i] \right\|^2 \le \frac{1}{E}\sum_{e=0}^{E-1} L^2 \mathbb{E}_t\|w_i^e - w_t\|^2 \le L^2 D_i.
  \label{eq:di-proof}
\end{equation}
Since $\|\varepsilon_i(w)\|^2 \le \rho_i^2$ uniformly, $\|\bar\varepsilon_i\|^2 \le \rho_i^2$.
Using $a_i - g_i = d_i + \bar\varepsilon_i$ and $\|x+y\|^2 \le 2\|x\|^2 + 2\|y\|^2$ proves~\eqref{eq:drift-mean-bounds}.

To bound the variance $v_i = \mathbb{E}_t \|U_i - a_i\|^2$, decompose $U_i = \frac{1}{E}\sum_{e=0}^{E-1}\widehat g_i(w_i^e)$ into:
\begin{equation}
  U_i = g_i + \mathcal{R}_i + \mathcal{Z}_i + \mathcal{M}_i,
  \label{eq:ui-decomp}
\end{equation}
where $\mathcal{R}_i \coloneqq \frac{1}{E}\sum_{e=0}^{E-1}(\nabla f_i(w_i^e) - g_i)$, $\mathcal{Z}_i \coloneqq \frac{1}{E}\sum_{e=0}^{E-1}\varepsilon_i(w_i^e)$, and $\mathcal{M}_i \coloneqq \frac{1}{E}\sum_{e=0}^{E-1}\mathcal{E}_i^e$ with martingale difference (where $\mathcal F_i^e$ contains $\mathcal F_t$ and the client history before drawing step $e$ randomness):
\begin{equation}
  \mathcal{E}_i^e \coloneqq \widehat g_i(w_i^e) - \mathbb{E}[\widehat g_i(w_i^e) \mid \mathcal{F}_i^e].
  \label{eq:martingale-diff}
\end{equation}
Because $\mathcal{E}_i^e$ is a martingale difference sequence ($\mathbb{E}_t[\langle \mathcal{E}_i^e, \mathcal{E}_i^s \rangle] = 0$ for $e \ne s$):
\begin{align}
  \mathbb{E}_t \|\mathcal{M}_i\|^2
  &= \frac{1}{E^2}\sum_{e=0}^{E-1}\mathbb{E}_t \|\mathcal{E}_i^e\|^2 \notag\\
  &\le \frac{1}{E^2}\sum_{e=0}^{E-1}\left( 2\kappa_i \mathbb{E}_t\|\Pi_i \nabla f_i(w_i^e)\|^2 + \nu_i \right) \notag\\
  &\le \frac{1}{E}\left( 4\kappa_i \|\Pi_i g_i\|^2 + 4\kappa_i L^2 D_i + \nu_i \right).
  \label{eq:mi-variance}
\end{align}
Since $a_i = g_i + \mathbb{E}_t\mathcal{R}_i + \mathbb{E}_t\mathcal{Z}_i$, centering yields:
$U_i - a_i = (\mathcal{R}_i - \mathbb{E}_t\mathcal{R}_i) + (\mathcal{Z}_i - \mathbb{E}_t\mathcal{Z}_i) + \mathcal{M}_i$.
Applying $\|x+y+z\|^2 \le 3(\|x\|^2 + \|y\|^2 + \|z\|^2)$:
\begin{equation}
  v_i \le 3\mathbb{E}_t\|\mathcal{R}_i\|^2 + 3\mathbb{E}_t\|\mathcal{Z}_i\|^2 + 3\mathbb{E}_t\|\mathcal{M}_i\|^2 \le 3L^2 D_i + 3\rho_i^2 + \frac{3}{E}\left( 4\kappa_i \|\Pi_i g_i\|^2 + 4\kappa_i L^2 D_i + \nu_i \right),
  \label{eq:vi-final}
\end{equation}
which concludes the proof.
\end{proof}

\subsection{Dependent Rounding and Variance Reduction}
\label{app:round-variance}

Throughout, $\operatorname{Var}_t(X) \coloneqq \mathbb E_t\|X-\mathbb E_tX\|^2$ denotes the scalar conditional variance of a vector. Let selection vector $I_t \in \{0, 1\}^N$ indicate participating clients ($I_{i,t} = \mathbb{I}(i \in S_t)$).
Algorithm~\ref{alg:depround} ensures:
\begin{equation}
  \sum_{i=1}^N I_{i,t} = K, \qquad
  \mathbb{E}[I_{i,t} \mid \mathcal{F}_t] = K p_i, \qquad
  \operatorname{Cov}(I_{i,t}, I_{j,t} \mid \mathcal{F}_t) \le 0 \quad (\forall i \ne j).
  \label{eq:depround-properties}
\end{equation}

\begin{lemma}[Variance of Averaged Local Updates]
\label{lem:round-variance}
Under dependent rounding with independent client execution, the aggregated update $M_t = \frac{1}{K}\sum_{i=1}^N I_{i,t} U_{i,t}$ satisfies:
\begin{equation}
  \operatorname{Var}(M_t \mid \mathcal{F}_t) \le \frac{1}{K}\sum_{i=1}^N p_i \left[ v_{i,t} + 2\|a_{i,t} - \bar a_t\|^2 \right].
  \label{eq:round-variance-bound}
\end{equation}
\end{lemma}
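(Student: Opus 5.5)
We decompose the variance by conditioning on the selection, using the law of total variance with respect to $I_t$. Since client executions are independent of each other and of the sampling, conditional on $I_t$ (and $\mathcal F_t$) we have $\mathbb E[M_t\mid I_t,\mathcal F_t]=\frac1K\sum_i I_{i,t}a_{i,t}$. Hence
\begin{equation*}
\operatorname{Var}(M_t\mid\mathcal F_t)=\mathbb E_t\Bigl[\operatorname{Var}(M_t\mid I_t,\mathcal F_t)\Bigr]+\operatorname{Var}_t\Bigl(\frac1K\sum_i I_{i,t}a_{i,t}\Bigr).
\end{equation*}
We bound the two terms separately.

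\emph{Within-selection term.} Given $I_t$, the centered summands $U_{i,t}-a_{i,t}$ for the selected clients are independent and mean zero, so their cross terms vanish. This gives
\begin{equation*}
\operatorname{Var}(M_t\mid I_t,\mathcal F_t)=\frac1{K^2}\sum_i I_{i,t}v_{i,t}.
\end{equation*}
Taking $\mathbb E_t$ and using $\mathbb E_t I_{i,t}=Kp_i$ yields $\frac1K\sum_i p_i v_{i,t}$.

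\emph{Selection term.} The key trick is to center by $\bar a_t$. Because $\sum_i I_{i,t}=K$ holds deterministically,
\begin{equation*}
\frac1K\sum_i I_{i,t}a_{i,t}=\bar a_t+\frac1K\sum_i I_{i,t}c_i,\qquad c_i\coloneqq a_{i,t}-\bar a_t,
\end{equation*}
and the variance is invariant under the constant shift. Expanding $\operatorname{Var}_t\bigl(\sum_i I_{i,t}c_i\bigr)=\sum_{i,j}\operatorname{Cov}(I_i,I_j)\langle c_i,c_j\rangle$ is where the main obstacle lies. The covariances are nonpositive, but the inner products $\langle c_i,c_j\rangle$ can have either sign, so the off-diagonal terms cannot simply be dropped.

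I would avoid this sign problem by bounding the variance by the second moment:
\begin{equation*}
\operatorname{Var}_t\Bigl(\sum_i I_ic_i\Bigr)\le\mathbb E_t\Bigl\|\sum_i I_ic_i\Bigr\|^2=\sum_i Kp_i\|c_i\|^2+\sum_{i\ne j}\mathbb E[I_iI_j]\langle c_i,c_j\rangle.
\end{equation*}
The off-diagonal sum can still be negative, which is harmless for an upper bound, but it can also be positive. To control it robustly, apply Cauchy--Schwarz with $\sum_i I_i=K$:
\begin{equation*}
\Bigl\|\sum_i I_ic_i\Bigr\|^2\le K\sum_i I_i\|c_i\|^2.
\end{equation*}
Taking expectations gives $\mathbb E_t\|\sum_i I_ic_i\|^2\le K^2\sum_i p_i\|c_i\|^2$. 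Dividing by $K^2$, the selection term is at most $\sum_i p_i\|c_i\|^2$. This has the right form but is missing the $1/K$ factor and carries coefficient $1$ instead of $2/K$, so it is weaker than the stated bound whenever $K>2$.

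To recover the $1/K$, I would use the negative correlation more carefully, in the style of sampling without replacement. Write $\sum_i I_ic_i-\sum_i Kp_ic_i$ as a sum of centered terms $\sum_i (I_i-Kp_i)c_i$. Its second moment is
\begin{equation*}
\sum_i\operatorname{Var}(I_i)\|c_i\|^2+\sum_{i\ne j}\operatorname{Cov}(I_i,I_j)\langle c_i,c_j\rangle.
\end{equation*}
Bound the cross term using $|\langle c_i,c_j\rangle|\le\frac12(\|c_i\|^2+\|c_j\|^2)$ together with $\sum_{j\ne i}|\operatorname{Cov}(I_i,I_j)|=-\sum_{j\ne i}\operatorname{Cov}(I_i,I_j)=\operatorname{Var}(I_i)$. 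The latter identity follows from $\operatorname{Var}(\sum_j I_j)=0$ and nonpositive covariances. The result is
\begin{equation*}
\operatorname{Var}_t\Bigl(\sum_i I_ic_i\Bigr)\le 2\sum_i\operatorname{Var}(I_i)\|c_i\|^2\le 2K\sum_i p_i\|c_i\|^2.
\end{equation*}
Dividing by $K^2$ gives $\frac2K\sum_i p_i\|a_{i,t}-\bar a_t\|^2$. Adding this to the within-selection term completes the bound.
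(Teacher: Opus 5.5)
Your proof is correct and follows the paper's own route. The paper also uses the law of total variance over $I_t$ and gets the within-selection term from conditional independence. It then controls the selection term using the zero row sums of the inclusion covariance matrix (forced by $|S_t|=K$), the nonpositive off-diagonal covariances, Young's inequality on the cross terms, and $C_{ii}\le Kp_i$. The paper phrases the cross-term step as the identity $\sum_{i,j}C_{ij}\langle a_i,a_j\rangle=\tfrac12\sum_{i\ne j}(-C_{ij})\|a_i-a_j\|^2$ followed by $\|a_i-a_j\|^2\le 2\|a_i-\bar a\|^2+2\|a_j-\bar a\|^2$, which is equivalent to your centering-then-$|\langle c_i,c_j\rangle|\le\tfrac12(\|c_i\|^2+\|c_j\|^2)$ step. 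Your intermediate Cauchy--Schwarz attempt can simply be dropped.
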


\begin{proof}
Applying the law of total variance conditioned on filtration $\mathcal{F}_t$:
\begin{equation}
  \operatorname{Var}_t(M_t) = \mathbb{E}_t \left[ \operatorname{Var}_t(M_t \mid I_t) \right] + \operatorname{Var}_t \left( \mathbb{E}_t[M_t \mid I_t] \right).
  \label{eq:total-variance}
\end{equation}
Conditional on selection $I_t$, client local trajectories are statistically independent:
\begin{equation}
  \mathbb{E}_t \left[ \operatorname{Var}_t(M_t \mid I_t) \right] = \mathbb{E}_t \left[ \frac{1}{K^2}\sum_{i=1}^N I_{i,t} v_{i,t} \right] = \frac{1}{K^2}\sum_{i=1}^N (K p_i) v_{i,t} = \frac{1}{K}\sum_{i=1}^N p_i v_{i,t}.
  \label{eq:cond-var-term}
\end{equation}
For the second term, $\mathbb{E}_t[M_t \mid I_t] = \frac{1}{K}\sum_{i=1}^N I_{i,t} a_{i,t}$.
Let $C \in \mathbb{R}^{N \times N}$ denote the covariance matrix of $I_t$ conditional on $\mathcal F_t$.
Because $\sum_i I_{i,t} = K$ is strictly constant, $\sum_{j=1}^N C_{ij} = 0$, implying $C_{ii} = \sum_{j \ne i}(-C_{ij})$.
Using negative correlation $C_{ij} \le 0$ ($i \ne j$):
\begin{align}
  \operatorname{Var}_t \left( \frac{1}{K}\sum_{i=1}^N I_{i,t} a_{i,t} \right)
  &= \frac{1}{K^2}\sum_{i=1}^N \sum_{j=1}^N C_{ij} \langle a_{i,t}, a_{j,t} \rangle \notag\\
  &= \frac{1}{2K^2}\sum_{i \ne j}(-C_{ij}) \|a_{i,t} - a_{j,t}\|^2 \notag\\
  &\le \frac{1}{K^2}\sum_{i \ne j}(-C_{ij})\left( \|a_{i,t} - \bar a_t\|^2 + \|a_{j,t} - \bar a_t\|^2 \right) \notag\\
  &= \frac{2}{K^2}\sum_{i=1}^N C_{ii} \|a_{i,t} - \bar a_t\|^2.
  \label{eq:cov-expansion}
\end{align}
Since $I_{i,t} \in \{0, 1\}$ is Bernoulli, $C_{ii} = \operatorname{Var}_t(I_{i,t}) = K p_i(1 - K p_i) \le K p_i$.
Substituting this into~\eqref{eq:cov-expansion}:
\begin{equation}
  \operatorname{Var}_t \left( \frac{1}{K}\sum_{i=1}^N I_{i,t} a_{i,t} \right) \le \frac{2}{K}\sum_{i=1}^N p_i \|a_{i,t} - \bar a_t\|^2.
  \label{eq:sampling-variance-bound}
\end{equation}
Combining~\eqref{eq:cond-var-term} and~\eqref{eq:sampling-variance-bound} completes the proof.
\end{proof}

\subsection{Multi-Step Trajectory Descent Guarantee}
\label{app:main-proof}

The following proposition applies directly to the true $E$-step updates defined in~\eqref{eq:actual-update}, before bounding their bias and variance by model dimensions and heterogeneity.

\begin{proposition}[Multi-Step Trajectory Descent Guarantee]
\label{prop:trajectory-descent}
Let Assumptions~\ref{ass:objective}--\ref{ass:surrogate-new} hold. Fix integers $1\le K\le N$ and $E,T\ge1$, and a constant local step size $\eta>0$ with $h=\eta E\le\frac{1}{L}$. Fix $p$ before training, with $\sum_i p_i=1$ and $0\le p_i\le\frac{1}{K}$. At each round, draw $S_t=\operatorname{DepRound}(Kp)$, independently of the conditionally independent client training streams, so that $|S_t|=K$ and $\Pr(i\in S_t\mid\mathcal F_t)=Kp_i$. Then, for $\Delta\coloneqq f(w_0)-f_{\inf}$,
\begin{equation}
 \begin{aligned}
 \frac{1}{T}\sum_{t=0}^{T-1}\mathbb E\|\nabla f(w_t)\|^2
 &\le\frac{2\Delta}{hT}
 +\frac{1}{T}\sum_{t=0}^{T-1}\mathbb E\|B_t(p)\|^2\\
 &\quad+\frac{Lh}{KT}\sum_{t=0}^{T-1}\mathbb E\sum_{i=1}^N p_i
 \left(v_{i,t}+2\|a_{i,t}-\bar a_t\|^2\right),
 \end{aligned}
 \label{eq:main-bound}
\end{equation}
where $\bar a_t$ and $B_t(p)$ are defined in~\eqref{eq:trajectory-aggregate-bias}.
\end{proposition}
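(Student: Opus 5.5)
We will prove the proposition with a one-step descent argument at the round level, followed by telescoping over $t$.

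The starting point is that $f=\frac1N\sum_i f_i$ is $L$-smooth by Assumption~\ref{ass:objective}. Since $w_{t+1}=w_t-hM_t$, the descent lemma gives
\[
f(w_{t+1})\le f(w_t)-h\langle g_t,M_t\rangle+\frac{Lh^2}{2}\|M_t\|^2 .
\]

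Next we take $\mathbb E_t$. By~\eqref{eq:depround-properties}, $\mathbb E_t[I_{i,t}]=Kp_i$, and the sampling is independent of the client streams. Since $a_{i,t}$ is defined for every client whether or not it is selected, this yields
\[
\mathbb E_t M_t=\frac1K\sum_i Kp_i a_{i,t}=\sum_i p_i a_{i,t}=g_t+B_t(p).
\]

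The inner product is handled with the polarization identity
\[
-\langle g_t,g_t+B_t\rangle=-\tfrac12\|g_t\|^2-\tfrac12\|g_t+B_t\|^2+\tfrac12\|B_t\|^2 .
\]
For the quadratic term we write $\mathbb E_t\|M_t\|^2=\|\mathbb E_tM_t\|^2+\operatorname{Var}_t(M_t)$. Combining the two pieces:
\[
\mathbb E_t f(w_{t+1})\le f(w_t)-\frac h2\|g_t\|^2+\frac h2\|B_t\|^2-\frac h2(1-Lh)\|g_t+B_t\|^2+\frac{Lh^2}{2}\operatorname{Var}_t(M_t).
\]
The condition $h\le 1/L$ makes the coefficient of $\|g_t+B_t\|^2$ nonpositive, so that term can be dropped.

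For the variance we invoke Lemma~\ref{lem:round-variance}, which gives
\[
\operatorname{Var}_t(M_t)\le \frac1K\sum_i p_i\big(v_{i,t}+2\|a_{i,t}-\bar a_t\|^2\big).
\]

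To conclude, we rearrange, take total expectation, and sum over $t=0,\dots,T-1$. The left side telescopes, and $f(w_T)\ge f_{\inf}$, so the sum of $\mathbb E f(w_t)-\mathbb E f(w_{t+1})$ is at most $\Delta$. Multiplying through by $2/(hT)$ gives exactly~\eqref{eq:main-bound}: the $\|B_t\|^2$ term enters with coefficient $1$, and the variance term with coefficient $\frac{2}{h}\cdot\frac{Lh^2}{2}\cdot\frac1K=\frac{Lh}{K}$.

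The main obstacle is the conditional-mean identity, which must be justified carefully. The quantities $a_{i,t}$ and $v_{i,t}$ are conditional moments of trajectories that clients would run from $w_t$. We need $I_t$ to be independent of $\{U_{i,t}\}$ given $\mathcal F_t$. This holds because DepRound uses its own randomness and the client streams are conditionally independent, so $\mathbb E_t[I_{i,t}U_{i,t}]=Kp_i a_{i,t}$. The same independence is what Lemma~\ref{lem:round-variance} relies on, so both facts follow from one explicit conditioning argument.
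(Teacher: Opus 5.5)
Your proposal is correct and follows the paper's proof essentially step for step. Both use the descent lemma for $f$ with $w_{t+1}=w_t-hM_t$, the split $\mathbb E_t\|M_t\|^2=\|\mathbb E_tM_t\|^2+\operatorname{Var}_t(M_t)$ with $\mathbb E_tM_t=g_t+B_t(p)$, Lemma~\ref{lem:round-variance} for the variance, and telescoping. The only difference is cosmetic: the paper first bounds $\frac{Lh^2}{2}\le\frac h2$ and then applies $-2\langle a,b\rangle+\|b\|^2=-\|a\|^2+\|b-a\|^2$, whereas you polarize first and drop the nonpositive term $-\frac h2(1-Lh)\|g_t+B_t(p)\|^2$, which is the same computation.
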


\begin{proof}
By $L$-smoothness of global objective $f(w)$ (Assumption~\ref{ass:objective}):
\begin{equation}
  f(w_{t+1}) \le f(w_t) + \langle \nabla f(w_t), w_{t+1} - w_t \rangle + \frac{L}{2}\|w_{t+1} - w_t\|^2.
  \label{eq:smoothness-descent}
\end{equation}
Substituting $w_{t+1} - w_t = -h M_t$ and taking conditional expectations $\mathbb{E}_t[\cdot] \coloneqq \mathbb{E}[\cdot \mid \mathcal{F}_t]$:
\begin{equation}
  \mathbb{E}_t f(w_{t+1}) \le f(w_t) - h \langle g_t, \mathbb{E}_t M_t \rangle + \frac{L h^2}{2}\left( \|\mathbb{E}_t M_t\|^2 + \operatorname{Var}_t(M_t) \right).
  \label{eq:cond-descent}
\end{equation}
Recall that $\mathbb{E}_t M_t = \sum_i p_i a_{i,t} = g_t + B_t(p)$.
Using the algebraic identity $-2\langle a, b \rangle + \|b\|^2 = -\|a\|^2 + \|b - a\|^2$:
\begin{equation}
  -h \langle g_t, \mathbb{E}_t M_t \rangle + \frac{h}{2}\|\mathbb{E}_t M_t\|^2 = -\frac{h}{2}\|g_t\|^2 + \frac{h}{2}\|B_t(p)\|^2.
  \label{eq:inner-prod-identity}
\end{equation}
Since effective step size satisfies $Lh \le 1$, we have $\frac{L h^2}{2} \le \frac{h}{2}$.
Substituting~\eqref{eq:inner-prod-identity} into~\eqref{eq:cond-descent}:
\begin{equation}
  \mathbb{E}_t f(w_{t+1}) \le f(w_t) - \frac{h}{2}\|g_t\|^2 + \frac{h}{2}\|B_t(p)\|^2 + \frac{L h^2}{2}\operatorname{Var}_t(M_t).
  \label{eq:one-step-descent}
\end{equation}
Rearranging and bounding $\operatorname{Var}_t(M_t)$ via Lemma~\ref{lem:round-variance}:
\begin{equation}
  \|g_t\|^2 \le \frac{2(f(w_t) - \mathbb{E}_t f(w_{t+1}))}{h} + \|B_t(p)\|^2 + \frac{Lh}{K}\sum_{i=1}^N p_i\left( v_{i,t} + 2\|a_{i,t} - \bar a_t\|^2 \right).
  \label{eq:gt-bound}
\end{equation}
Taking total expectations, summing over $t = 0, \dots, T-1$, telescoping the objective values $\sum_{t=0}^{T-1}(\mathbb{E}f(w_t) - \mathbb{E}f(w_{t+1})) = f(w_0) - \mathbb{E}f(w_T) \le f(w_0) - f_{\inf} = \Delta$, and dividing by $T$ completes the proof of Proposition~\ref{prop:trajectory-descent}.
\end{proof}

\subsection{Formal Non-Convex Convergence Guarantee}
\label{app:affine}

The following theorem gives the formal version of Theorem~\ref{thm:convergence}, including the sampling, horizon, and parameter requirements. Let $a_{\mathrm H},\zeta\ge0$ be the constants in Assumption~\ref{ass:affine-heterogeneity}, and define
\begin{equation}
  M_p\coloneqq\sum_{i=1}^N p_i d_i^{\mathrm{ZO}}(d_i^{\mathrm{ZO}}+2)(d_i^{\mathrm{ZO}}+4),
  \qquad \chi^2(p)\coloneqq N\sum_{i=1}^N\left(p_i-\frac{1}{N}\right)^2.
  \label{eq:weighted-dimensions}
\end{equation}
Thus $M_p$ controls the finite-difference remainder, and $\chi^2(p)$ measures departure from uniform participation.

\begin{theorem}[Non-Convex Convergence of HO-FL with Fixed Sampling]
\label{thm:convergence-formal}
Suppose Assumptions~\ref{ass:objective}--\ref{ass:affine-heterogeneity} hold, with $\sigma_i^2\le\sigma^2$ and $L_i^s\le L_s$ for all clients. Fix integers $1\le K\le N$ and $E,q,T\ge1$, client boundaries, and a probability vector $p$ satisfying
$\sum_i p_i=1$ and $\frac{\epsilon}{N}\le p_i\le\frac{1}{K}$ for a fixed $\epsilon\in(0,1]$.
Use the local updates~\eqref{eq:local-updates} and the dependent-rounding and independence conditions of Proposition~\ref{prop:trajectory-descent}.
The following explicit constants are sufficient:
\begin{equation}
 \begin{aligned}
 C_0&\coloneqq 1+\left(30+\frac{12}{\epsilon}\right)
       \left[(32N+8\sigma^2)L^2+2L_s^2\right]
       +(54N+12\sigma^2)L,\\
 C_\star&\coloneqq\max\left\{
 1,\;8L^2\left(1+\frac{2N}{\epsilon}\right),\;
 \left(\frac{2NL}{\epsilon}\right)^2,\;
 144C_0^2(1+a_{\mathrm H})^2\right\}.
 \end{aligned}
 \label{eq:formal-rate-constants}
\end{equation}
Assume the sampling skew and training horizon satisfy
\begin{equation}
  a_{\mathrm H}\chi^2(p)\le\frac{1}{12},\qquad
  T\ge C_\star\left(1+\frac{1}{q}\sum_{i=1}^N p_i d_i^{\mathrm{ZO}}\right).
  \label{eq:rate-regime}
\end{equation}
Choose the common local step size and perturbation radius as
\begin{equation}
 \begin{aligned}
  \eta&=\frac{1}{E\sqrt{T\left(1+\frac{1}{q}\sum_{i=1}^N p_i d_i^{\mathrm{ZO}}\right)}},\\
  0<\mu&\le\frac{1}{\sqrt{M_p}}
       \left(\frac{1+\frac{1}{q}\sum_{i=1}^N p_i d_i^{\mathrm{ZO}}}{T}\right)^{\frac{1}{4}}
       \quad\text{if }M_p>0.
 \end{aligned}
 \label{eq:rate-stepsizes}
\end{equation}
All clients use the same $q$ and $\mu$; if $M_p=0$, all clients use FO and no perturbation radius is needed.
Then, with $\Delta=f(w_0)-f_{\inf}$ and $C_{\mathrm{rate}}\coloneqq4\Delta+(8+6\zeta^2)C_0$,
\begin{equation}
 \frac{1}{T}\sum_{t=0}^{T-1}\mathbb E\|\nabla f(w_t)\|^2
 \le C_{\mathrm{rate}}
 \sqrt{\frac{1+\frac{1}{q}\sum_{i=1}^N p_i d_i^{\mathrm{ZO}}}{T}}
 +6\zeta^2N\sum_{i=1}^N\left(p_i-\frac{1}{N}\right)^2.
 \label{eq:formal-convergence-rate}
\end{equation}
In particular, this yields the two-term $\mathcal O$ rate in~\eqref{eq:main-explicit-rate}. Its implicit constants can depend on $N,\epsilon,L,L_s,\sigma,a_{\mathrm H},\zeta$ and $\Delta$, but not on $E,T,K,q,\mu,d_i^{\mathrm{ZO}}$ or $p_i$.
\end{theorem}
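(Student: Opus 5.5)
The plan is to start from Proposition~\ref{prop:trajectory-descent} and bound its two sampling-dependent terms, $\mathbb E\|B_t(p)\|^2$ and the weighted variance sum. We bound each by quantities involving $\|g_t\|^2$, the dimension-weighted factor $\Lambda\coloneqq 1+\frac1q\sum_i p_i d_i^{\mathrm{ZO}}$, $\zeta^2$ and $\chi^2(p)$. Then we absorb the $\|g_t\|^2$ pieces into the left-hand side. First we check $h=\eta E=1/\sqrt{T\Lambda}\le 1/L$, which follows from $T\ge C_\star\Lambda\ge (2NL/\epsilon)^2\Lambda$. We also check the drift condition $4(1+\kappa_i)L^2\tau\le\frac12$ of Lemma~\ref{lem:multistep}. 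Since $\tau\le h^2$, this needs $\kappa_i h^2 L^2\lesssim 1$. This is where the lower bound $p_i\ge\epsilon/N$ enters: $\kappa_i\le (1+d_i^{\mathrm{ZO}})/q\le \frac{N}{\epsilon}p_i(1+d_i^{\mathrm{ZO}})/q\le \frac{2N}{\epsilon}\Lambda$. Hence $8L^2(1+2N/\epsilon)\Lambda/(T\Lambda)\le 1$ by the $C_\star$ condition.

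Next we handle the bias via the decomposition~\eqref{eq:bias-decomposition}, using $\|x+y+z\|^2\le3(\cdots)$. For the selection-skew term, Cauchy--Schwarz gives $\|\sum_i(p_i-\frac1N)(g_{i}-g)\|^2\le \chi^2(p)H(w_t)$, since $\sum_i(p_i-\frac1N)=0$. Assumption~\ref{ass:affine-heterogeneity} then gives at most $\chi^2(p)(a_{\mathrm H}\|g_t\|^2+\zeta^2)$. With the factor $3$ and $a_{\mathrm H}\chi^2\le\frac1{12}$, this contributes $\frac14\|g_t\|^2+3\zeta^2\chi^2(p)$. After doubling when we absorb, this yields the $6\zeta^2\chi^2(p)$ term. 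For the drift and approximation terms we use Jensen, $\|\sum p_i d_i\|^2\le\sum p_iL^2D_i$ and $\|\sum p_i\bar\varepsilon_i\|^2\le\sum_ip_i\rho_i^2$, together with Lemma~\ref{lem:multistep}.

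The main obstacle is bounding $\sum_ip_iD_i$, $\sum_ip_i v_i$ and $\sum_i p_i\|a_i-\bar a\|^2$, because they contain $\sum_ip_iQ_i$ with non-uniform weights on $\|g_i\|^2$. We convert these weights using $p_i\le1/K\le1$ and $\sum_i\|g_i\|^2=N(\|g\|^2+H)\le N(1+a_{\mathrm H})\|g\|^2+N\zeta^2$. The factor $N$ in $C_0$ suggests this crude step is intended. The $\kappa_i\|\Pi_ig_i\|^2$ pieces carry a factor $\tau$ or $\frac{h}{E}$. We bound them by $\kappa_i\le\frac{2N}{\epsilon}\Lambda$, so that $h^2\kappa_i\lesssim \frac{N}{\epsilon}\Lambda h^2=\frac{N}{\epsilon T}$, which is small. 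The $\nu_i$ terms give $\sigma^2+\kappa_i\sigma^2$. Weighted by $p_i$, these produce $\sigma^2\Lambda$, and multiplied by $Lh/K$ they yield $\sigma^2 L\sqrt{\Lambda/T}$. The $\omega_i^2$ and $\rho_i^2$ terms give $\mu^2L_s^2M_p$, and the choice of $\mu$ makes this at most $L_s^2\sqrt{\Lambda/T}$. For the dispersion, $\|a_i-\bar a\|^2\le 3\|g_i-g\|^2+3\|a_i-g_i\|^2+3\|\frac1N\sum_j(a_j-g_j)\|^2$. This puts the heterogeneity through $p_i\le 1$ and $Lh/K$ with a $\sqrt{\Lambda/T}$-scaling.

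Collecting everything, every non-skew term is of the form $c\cdot h\Lambda$ or $c\cdot h$, which is $\le c\sqrt{\Lambda/T}$, or of the form $c\cdot h\|g_t\|^2$. The constants $c$ are bounded by $C_0$ times $(1+\zeta^2)$ or $(1+a_{\mathrm H})$. The condition $T\ge144C_0^2(1+a_{\mathrm H})^2\Lambda$ ensures $C_0(1+a_{\mathrm H})h\le\frac1{12}$. The $\|g_t\|^2$ coefficients then total at most $\frac12$, and moving them left doubles the right-hand side. Finally, $2\Delta/(hT)=2\Delta\sqrt{\Lambda/T}$. After doubling and collecting the constants into $C_{\mathrm{rate}}=4\Delta+(8+6\zeta^2)C_0$, we obtain~\eqref{eq:formal-convergence-rate}. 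We would not try to track the exact constants beyond verifying that each lands inside $C_0$.
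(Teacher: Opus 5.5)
Your outline follows the paper's proof step for step. That includes the trajectory-descent proposition, the three-way split of $B_t(p)$, Cauchy--Schwarz for the skew, Jensen for drift and approximation bias, the three-way dispersion split using the coverage floor, stability checks via the coverage bound on $\kappa_i$, and the final absorption. The step that fails is your treatment of the $\kappa_i\|\Pi_i g_i\|^2$ part of $Q_i$. You bound $\kappa_i\le\frac{2N}{\epsilon}\Lambda$ pointwise and the gradients by $\sum_i p_i\|g_i\|^2\le N(\|g\|^2+H)$, and this costs a spurious factor $N/\epsilon$. In the $\tau$-terms that factor can be paid for by the spare $1/\sqrt{T}$ left over from $\tau\Lambda\le 1/T$. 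The martingale part of $v_i$ contributes $\frac{Lh}{K}\cdot\frac{12}{E}\sum_i p_i\kappa_i\|\Pi_i g_i\|^2=\frac{12L\eta}{K}\sum_i p_i\kappa_i\|\Pi_i g_i\|^2$, and there your bound gives the coefficient $\frac{24N^2L}{\epsilon KE}\sqrt{\Lambda/T}$ on $\|g\|^2+H$. This is already at the target rate, so no power of $T$ is left to spare. Both $C_0$ and $\sqrt{C_\star}$ grow only linearly in $N$, so $T\ge C_\star\Lambda$ guarantees only $\sqrt{\Lambda/T}=O(1/N)$ when $L,\epsilon,\sigma,L_s,a_{\mathrm H}$ are fixed. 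Hence for $K=E=1$ this coefficient, times $1+a_{\mathrm H}$, stays of order $N$ instead of falling below $\frac14$. Your claim that every constant lands inside $C_0$ is therefore false for this term, and absorption fails under the stated $C_\star$. As written, your route would give the rate only with a larger horizon constant.

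The fix is the weighted-sum device you already use for $\nu_p$: factor out the maximum of the gradients rather than the maximum of $\kappa_i$. Since $\|\Pi_i g_i\|^2\le\|g_i\|^2\le 2\|g\|^2+2NH$ and $\sum_i p_i(1+\kappa_i)\le 1+\Lambda\le 2\Lambda$, you get $\sum_i p_iQ_i\le 4N\Lambda(\|g\|^2+H)$. The martingale term is then at most $\frac{48NL\eta\Lambda}{K}(\|g\|^2+H)$. The drift part $8L^2\tau\sum_i p_iQ_i$ is at most $32NL^2\tau\Lambda(\|g\|^2+H)$. These match exactly the $48NL$ entry of $C_0$ (its remaining $6NL$ covers the dispersion term $\frac{6NL\eta E}{K}H$) and the $32NL^2$ entry. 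Use the pointwise bound $\kappa_{\max}\le\frac{2N}{\epsilon}\Lambda$ only for the stability conditions. These include $\frac{L\eta\kappa_{\max}}{K}\le 1$ (from $C_\star\ge(2NL/\epsilon)^2$), which you did not list. That condition controls the $\frac{12L\eta}{K}\kappa_i L^2 D_i$ contributions to $\frac{Lh}{K}v_i$, where $\kappa_i$ multiplies $D_i$, which itself contains $\kappa_i$, so the weighted sum is unavailable there. Finally, every $\|g_t\|^2$ coefficient carries $\eta\Lambda/K$ or $\tau\Lambda$ (the paper's $A_p(\eta)\le 3\sqrt{\Lambda/T}$). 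So the horizon condition must make $C_0(1+a_{\mathrm H})\,h\Lambda=C_0(1+a_{\mathrm H})\sqrt{\Lambda/T}$ at most $\frac{1}{12}$, not merely $C_0(1+a_{\mathrm H})\,h$. This is exactly what $T\ge 144C_0^2(1+a_{\mathrm H})^2\Lambda$ provides.
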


\begin{proof}
We first verify that the stated horizon and parameter choices imply every stability condition used below. Let $\kappa_{\max}=\max_i\kappa_i$ and $\tau=\eta^2(E-1)^2$. Coverage and $d_i^{\mathrm{ZO}}+1\le2d_i^{\mathrm{ZO}}$ for a ZO client imply
\[
 \kappa_{\max}\le\frac{2N}{\epsilon}
 \left(1+\frac{1}{q}\sum_i p_i d_i^{\mathrm{ZO}}\right).
\]
Substituting~\eqref{eq:rate-stepsizes} and using~\eqref{eq:rate-regime} gives
\begin{equation}
 \begin{aligned}
 L\eta E&\le\frac{L}{\sqrt{C_\star}}\le1,\\
 4(1+\kappa_{\max})L^2\tau
 &\le\frac{4L^2(1+\frac{2N}{\epsilon})}{C_\star}\le\frac{1}{2},\\
 \frac{L\eta\kappa_{\max}}{K}
 &\le\frac{2NL}{\epsilon\sqrt{C_\star}}\le1.
 \end{aligned}
 \label{eq:rate-stability}
\end{equation}
Thus Proposition~\ref{prop:trajectory-descent} and Lemma~\ref{lem:multistep} apply. Define the proof coefficient
\begin{equation}
 A_p(\eta)\coloneqq
 \frac{\eta\left(E+1+\frac{1}{q}\sum_i p_i d_i^{\mathrm{ZO}}\right)}{K}
 +\eta^2(E-1)^2\left(1+\frac{1}{q}\sum_i p_i d_i^{\mathrm{ZO}}\right).
 \label{eq:rate-coefficient}
\end{equation}
Since $E,K\ge1$, the chosen step size bounds the local-drift term by $\frac{1}{T}$ and gives
\begin{equation}
 \begin{aligned}
 A_p(\eta)
 &\le2\sqrt{\frac{1+\frac{1}{q}\sum_i p_i d_i^{\mathrm{ZO}}}{T}}+\frac{1}{T}\\
 &\le3\sqrt{\frac{1+\frac{1}{q}\sum_i p_i d_i^{\mathrm{ZO}}}{T}}
 \le\frac{3}{\sqrt{C_\star}}.
 \end{aligned}
 \label{eq:rate-coefficient-bound}
\end{equation}
By~\eqref{eq:formal-rate-constants}, this ensures
$C_0(1+a_{\mathrm H})A_p(\eta)\le\frac{1}{4}$.

Next, fix a round and suppress $t$. Write $g=\nabla f(w_t)$, $g_i=\nabla f_i(w_t)$, $H=H(w_t)$, $G_2=\|g\|^2+H$, and $V=\operatorname{Var}(M_t\mid\mathcal F_t)$. Using $Q_i,D_i$ from Lemma~\ref{lem:multistep}, define $Q_p=\sum_i p_iQ_i$, $\overline D=\sum_i p_iD_i$, $\nu_p=\sum_i p_i\nu_i$ and $\rho_p^2=\sum_i p_i\rho_i^2$.
Since $\|g_i-g\|^2\le NH$ and $\sum_i p_i(1+\kappa_i)\le2(1+\frac{1}{q}\sum_i p_i d_i^{\mathrm{ZO}})$, the estimator moments imply
\begin{align}
 Q_p&\le4N\left(1+\frac{1}{q}\sum_i p_i d_i^{\mathrm{ZO}}\right)G_2,\notag\\
 \nu_p&\le4\sigma^2\left(1+\frac{1}{q}\sum_i p_i d_i^{\mathrm{ZO}}\right)+\frac{L_s^2}{2}\mu^2M_p,
 \qquad \rho_p^2\le L_s^2\mu^2M_p,\notag\\
 \overline D&=2\tau(4Q_p+\nu_p).
 \label{eq:weighted-moment-bounds}
\end{align}
Cauchy--Schwarz gives
\begin{equation}
 \left\|\sum_i\left(p_i-\frac{1}{N}\right)g_i\right\|^2\le\chi^2(p)H.
 \label{eq:chi-heterogeneity}
\end{equation}
Combining this with the bias decomposition and Lemma~\ref{lem:multistep} yields
\begin{equation}
 \|B(p)\|^2\le3\chi^2(p)H+3L^2\overline D+3\rho_p^2.
 \label{eq:bp-sq-bound}
\end{equation}
For the participation variance, let $r_i=a_i-g_i$ and $\bar r=\frac{1}{N}\sum_i r_i$. The coverage condition gives $\|\bar r\|^2\le\frac{1}{\epsilon}\sum_i p_i\|r_i\|^2$, so
\begin{equation}
 \sum_i p_i\|a_i-\bar a\|^2
 \le3NH+6\left(1+\frac{1}{\epsilon}\right)(L^2\overline D+\rho_p^2).
 \label{eq:weighted-update-dispersion}
\end{equation}
Together with~\eqref{eq:multistep-var-bound}, Lemma~\ref{lem:round-variance}, and the verified stability bounds~\eqref{eq:rate-stability}, this gives
\begin{equation}
 \begin{aligned}
 \|B(p)\|^2+LhV
 &\le3\chi^2(p)H
 +\left(30+\frac{12}{\epsilon}\right)L^2\overline D
 +\left(18+\frac{12}{\epsilon}\right)\rho_p^2\\
 &\quad+\frac{6NL\eta E}{K}H
 +\frac{12L\eta}{K}Q_p+\frac{3L\eta}{K}\nu_p.
 \end{aligned}
 \label{eq:aggregate-error-expanded}
\end{equation}
Substitute~\eqref{eq:weighted-moment-bounds}. Since~\eqref{eq:rate-stability} also implies $L^2\tau\le\frac{1}{8}$ and $\frac{L\eta}{K}\le1$, the constant $C_0$ in~\eqref{eq:formal-rate-constants} bounds all resulting coefficients, yielding
\begin{equation}
 \|B(p)\|^2+LhV
 \le3\chi^2(p)H+C_0A_p(\eta)(\|g\|^2+H+1)+C_0\mu^2M_p.
 \label{eq:affine-step-bound}
\end{equation}
For all-FO training, the terms involving $\mu^2M_p$ are understood as zero.

By~\eqref{eq:rate-regime}, $3a_{\mathrm H}\chi^2(p)\le\frac{1}{4}$, and the first part of the proof already established $C_0(1+a_{\mathrm H})A_p(\eta)\le\frac{1}{4}$. Substituting the affine heterogeneity bound into~\eqref{eq:affine-step-bound} therefore contributes at most $\frac{1}{2}\|g\|^2$ to the right-hand side of Proposition~\ref{prop:trajectory-descent}. Absorbing this term gives
\begin{equation}
 \frac{1}{T}\sum_{t=0}^{T-1}\mathbb E\|g_t\|^2
 \le\frac{4\Delta}{\eta ET}
 +2C_0(1+\zeta^2)A_p(\eta)+2C_0\mu^2M_p+6\zeta^2\chi^2(p).
 \label{eq:rate-before-balancing}
\end{equation}
Finally, $\frac{1}{\eta ET}=\sqrt{(1+\frac{1}{q}\sum_i p_i d_i^{\mathrm{ZO}})/T}$, while~\eqref{eq:rate-stepsizes} bounds $\mu^2M_p$ by the same quantity. Substitution into~\eqref{eq:rate-before-balancing}, together with~\eqref{eq:rate-coefficient-bound}, gives~\eqref{eq:formal-convergence-rate} with $C_{\mathrm{rate}}=4\Delta+(8+6\zeta^2)C_0$.
\end{proof}

The step size controls local drift directly, yielding a rate that isolates dimension and sampling effects without claiming speedup in $E$ or $K$. Uniform participation removes the sampling-bias term; non-uniform participation trades a smaller sampling-weighted ZO dimension against residual data heterogeneity.

% ===== END appendices/theory_v3.tex =====

\section{Dimension-Aware Sampling: Derivations and Practical Guidance}
\label{app:estimation}

This appendix supplies the derivations behind Section~\ref{sec:sampling} and a practical procedure for choosing the fixed sampling preference $\beta$. We first derive the scalar relaxation and its constrained solution, then quantify the role of $\beta$ and describe how to select it before training.

\subsection{Derivation of the Sampling Relaxation}
\label{app:qp}

\paragraph{The statistical reference problem.}
Condition on the history before a round, and treat $a_i,v_i,g$ as fixed conditional statistics. Set $\mathcal A=[a_1,\ldots,a_N]$ and $c_i=\frac{Lh}{K}[v_i+2\|a_i-\bar a\|^2]$. The sampling-dependent part of Proposition~\ref{prop:trajectory-descent} is
\begin{equation}
 J(p)=\|\mathcal A p-g\|^2+c^\top p
 =p^\top\mathcal A^\top\mathcal A p-2g^\top\mathcal A p+\|g\|^2+c^\top p.
 \label{eq:app-sampling-objective}
\end{equation}
Its Hessian is $2\mathcal A^\top\mathcal A\succeq0$, establishing convexity of~\eqref{eq:sampling-qp}. This is an optimum of the current conditional bound; the practical policy uses a single probability vector fixed before training.

\paragraph{Scalar relaxation.}
Let $u_i=\frac{1}{N}$ and $H_a=\frac{1}{N}\sum_i\|a_i-\bar a\|^2$. Since $\sum_i(p_i-u_i)=0$,
\begin{align}
 \mathcal A p-g
 &=\sum_i(p_i-u_i)(a_i-\bar a)+(\bar a-g),\notag\\
 \|\mathcal A p-g\|^2
 &\le2\left\|\sum_i(p_i-u_i)(a_i-\bar a)\right\|^2+2\|\bar a-g\|^2\notag\\
 &\le2\|p-u\|^2\sum_i\|a_i-\bar a\|^2+2\|\bar a-g\|^2.
 \label{eq:app-scalar-relaxation}
\end{align}
Thus $J(p)\le A\|p-u\|^2+c^\top p+2\|\bar a-g\|^2$, with $A=2NH_a$. The last term is independent of $p$, yielding~\eqref{eq:scalar-qp}. This removes $g$ from the sampling optimization, while its effect remains in the convergence analysis through $\bar a-g$. The relaxation preserves client-level noise and update dispersion but discards directional cancellation and the cross term with $\bar a-g$.

\paragraph{KKT conditions and the clipped solution.}
For $A>0$, first consider $K<N$ and $\epsilon<1$, and write $\underline p=\frac{\epsilon}{N}$ and $\overline p=\frac{1}{K}$. The Lagrangian is
\[
 \mathcal J(p,\lambda,\alpha,\omega)
 =A\|p-u\|^2+c^\top p+\lambda\Big(\sum_i p_i-1\Big)
 +\sum_i\alpha_i(\underline p-p_i)+\sum_i\omega_i(p_i-\overline p),
\]
where $\alpha_i,\omega_i\ge0$. Stationarity and complementary slackness require
\begin{equation}
 \begin{aligned}
 2A(p_i-u_i)+c_i+\lambda-\alpha_i+\omega_i&=0,\\
 \alpha_i(p_i-\underline p)=0,\qquad
 \omega_i(\overline p-p_i)&=0.
 \end{aligned}
 \label{eq:app-sampling-kkt}
\end{equation}
At an interior coordinate, both inequality multipliers vanish, giving $p_i=u_i-\frac{c_i+\lambda}{2A}$. At the lower bound, stationarity implies that this unconstrained value is at most $\underline p$; at the upper bound it is at least $\overline p$. Consequently,
\begin{equation}
 p_i^\star=\operatorname{clip}_{[\epsilon/N,\,1/K]}
 \left(\frac{1}{N}-\frac{c_i+\lambda}{2A}\right).
 \label{eq:clip-solution}
\end{equation}
Strict convexity makes $p^\star$ unique. The sum of its coordinates is continuous and nonincreasing in $\lambda$. A valid bisection bracket is
\[
 \lambda_{\mathrm{lo}}=\min_i\left\{2A\left(\frac{1}{N}-\frac{1}{K}\right)-c_i\right\},\qquad
 \lambda_{\mathrm{hi}}=\max_i\left\{2A\left(\frac{1}{N}-\frac{\epsilon}{N}\right)-c_i\right\}.
\]
At these endpoints, the coordinate sums are $\frac{N}{K}\ge1$ and $\epsilon\le1$, respectively. When $A=0$, the objective is linear: initialize every coordinate at $\frac{\epsilon}{N}$ and allocate the remaining mass in increasing order of $c_i$, up to $\frac{1}{K}$; tied costs can share the allocation. If $K=N$ or $\epsilon=1$, the feasible set is simply $\{u\}$.

\subsection{Choosing $\beta$ in Practice}
\label{app:beta-practice}

HO-FL uses $\beta$ as a single preference parameter, selected before the training run and held fixed. The following guidelines combine scenario knowledge, a small validation budget, and desired participation coverage.

\paragraph{Starting from scenario knowledge.}
When client data are broadly representative of the same population and ZO dimensions differ substantially, $\beta=0.75$ is a useful starting point. For intermediate heterogeneity, or limited prior knowledge, start from $\beta=0.5$. With pronounced client-specific data skew, start from $\beta=0.25$; $\beta=0$ remains the uniform reference. Resource--data correlation matters as well: if the preferred clients represent only a narrow part of the population, a smaller preference is appropriate even when their dimension advantage is large. These values are empirical starting points, rather than universal heterogeneity thresholds. Table~\ref{tab:beta-v2} illustrates the pattern on Qwen/AG News: $0.75$ performs best under IID and the two milder skews, whereas $0.25$ performs best at $\alpha=0.1$. Further IID comparisons appear in Appendix~\ref{app:iid-beta}.

\paragraph{A small-budget validation procedure.}
Without reliable scenario knowledge, we recommend comparing $\beta\in\{0,0.25,0.5,0.75\}$ in short pilot runs on a representative subset of the target task. Preserve the client partition structure, resource tiers, model boundaries, and any resource--data correlation. Evaluate on held-out validation data spanning all resource tiers, using the mean client validation loss to match the uniformly weighted client objective. Select the best candidate, favoring a smaller $\beta$ when validation differences are inconclusive, then fix it for the formal run. Federated validation and such successive-halving procedures are established approaches to hyperparameter selection~\citep{khodak2021fedex}. This is a recommended deployment procedure; the experimental configurations and sensitivity sweeps used in this paper are reported in Appendix~\ref{app:experiments}.

\paragraph{Choosing a coverage budget.}
A complementary criterion requires no gradient or heterogeneity estimates. For a nonpreferred client $i\notin R$, exact inclusion probabilities imply
\begin{equation}
 \mathbb E\!\left[\sum_{t=0}^{T-1}\mathbf1\{i\in S_t\}\right]
 =TKp_i=\frac{KT}{N}(1-\beta).
 \label{eq:app-beta-coverage}
\end{equation}
If the desired minimum expected participation count is $m\in[0,KT/N]$, restrict candidates to
\begin{equation}
 \beta\le1-\frac{mN}{KT}.
 \label{eq:app-beta-coverage-cap}
\end{equation}
For our $N=30$, $K=6$, and $T=160$ setting, $\beta=0.25,0.5,0.75$ correspond to $24,16,8$ expected participations per nonpreferred client. This is an expectation, rather than a guaranteed count in every run. It translates the preference parameter into a concrete participation budget: choose an acceptable coverage level, then use scenario knowledge or validation to select among the remaining candidates. Once selected, $\beta$ and the resource ranking remain fixed, so this procedure adds no online moment estimation to the training algorithm.

\section{Experimental Details}
\label{app:experiments}

% ===== BEGIN appendices/experiments_v2.tex =====

\subsection{Datasets and Benchmark Configurations}
\label{app:datasets}
We evaluate all frameworks across five representative language understanding and generation benchmarks. Table~\ref{tab:dataset-stats} summarizes their task types, sample sizes, and evaluation metrics:
\begin{itemize}[leftmargin=18pt]
  \item \textbf{SST-2}~\citep{sst2}: The Stanford Sentiment Treebank binary classification task. Evaluated on the full official validation set (872 examples).
  \item \textbf{BoolQ}~\citep{boolq}: A reading comprehension benchmark of natural yes/no questions paired with Wikipedia passages. Evaluated on the full official validation set (3,270 examples).
  \item \textbf{SciQ}~\citep{sciq}: A crowdsourced multiple-choice science examination benchmark spanning physics, chemistry, and biology. Evaluated on the official test set (1,000 examples).
  \item \textbf{AG News}~\citep{zhang2015character}: A 4-class topic classification dataset (World, Sports, Business, Sci/Tech). We evaluate IID and non-IID settings with 4,000 training examples per client to control for quantity skew~\citep{li2022noniid}. Non-IID class proportions follow
$\operatorname{Dir}(\alpha\mathbf{1}_4)$,
$\alpha \in \{1, 0.5, 0.1\}$; exhausted class pools are replenished by sampling with replacement, allowing duplicates within and across clients. Evaluation uses the standard test set (7,600 examples).
  \item \textbf{SQuAD 1.1}~\citep{squad}: A challenging token-level extractive question answering benchmark. We adopt greedy auto-regressive decoding with a maximum prompt length of 512 tokens and a generation cap of 64 tokens. We report official Exact Match (EM) and Macro-F1 metrics on all 10,570 validation examples.
\end{itemize}

\begin{table}[htbp]
\centering
\caption{\textbf{Benchmark dataset statistics and evaluation protocols.}}
\label{tab:dataset-stats}
\small
\begin{tabular}{llrrll}
\toprule
Dataset & Task & Train & Eval & Partition & Metric \\
\midrule
SST-2 & Sentiment Classification & 67,349 & 872 & IID & Accuracy (\%) \\
BoolQ & Reading Comprehension (Y/N) & 9,427 & 3,270 & IID & Accuracy (\%) \\
SciQ & Science QA (4 choices) & 11,679 & 1,000 & IID & Accuracy (\%) \\
AG News & Topic Classification (4 classes) & 120,000 & 7,600 & IID / $\operatorname{Dir}(\alpha)$ & Accuracy (\%) \\
SQuAD 1.1 & Extractive QA & 87,599 & 10,570 & IID & EM / F1 (\%) \\
\bottomrule
\end{tabular}
\end{table}

\subsection{Model Architectures and LoRA Configuration}
\label{app:model-configs}
All pretrained foundation models are instantiated using their official HuggingFace checkpoints: \texttt{facebook/opt-125m} (12 transformer layers, hidden dimension 768), \texttt{Qwen/Qwen2.5-1.5B} (28 layers, hidden dimension 1536), and \texttt{HuggingFaceTB/SmolLM3-3B} (36 layers, hidden dimension 2048).
The backbone network weights are fully frozen and cast to BF16 precision.
Low-Rank Adaptation (LoRA)~\citep{lora} is injected exclusively into the query and value projection matrices ($W_q, W_v$) across all attention modules:
\begin{itemize}[leftmargin=18pt]
  \item \textbf{LoRA Hyperparameters:} We configure adapter rank $r=8$, LoRA scaling factor $\alpha_{\mathrm{lora}}=16$, and zero dropout across all experiments. LoRA matrices $A$ and $B$ are stored in FP32 precision to guarantee numerical stability during zeroth-order gradient perturbations and finite-difference evaluations.
  \item \textbf{Optimizer Configuration:} Local training uses AdamW~\citep{adamw} with decoupled weight decay $5 \times 10^{-4}$, learning rate $10^{-5}$, momentum parameters $(\beta_1, \beta_2) = (0.9, 0.999)$, and stabilizer $\epsilon = 10^{-8}$. Optimizer momentum buffers are reset upon each client participation.
\end{itemize}

\subsection{Baseline Implementation Details}
\label{app:baselines}
We compare HO-FL against the following state-of-the-art baselines using matched data partitions and seeds at the same number of communication rounds:
\begin{itemize}[leftmargin=18pt]
  \item \textbf{FedAvg}~\citep{fedavg}: Executes standard backpropagation (FO) on all trainable LoRA adapters across the entire model. All selected clients update all adapters and transmit their parameter deltas.
  \item \textbf{FedProx}~\citep{fedprox}: Adds a proximal regularization term $\frac{\mu_{\mathrm{prox}}}{2}\|w - w_t\|^2$ to the local objective to combat client drift under non-IID data distributions. We set $\mu_{\mathrm{prox}} = 10^{-3}$.
  \item \textbf{DeComFL}~\citep{decomfl}: A state-of-the-art pure zeroth-order federated optimization method. All trainable parameters are perturbed via randomized Gaussian directions ($q=2, \mu=10^{-3}$). Communication cost is dimension-free via shared seed generation. We adapt DeComFL to support local AdamW updates with deterministic replay.
  \item \textbf{HO-FL ($\beta=0$):} Executes hybrid-order local updates identical to HO-FL across the heterogeneous client tiers, but samples participating clients uniformly at random ($p_i = 1/N$, equivalent to setting $\beta=0$).
  \item \textbf{HO-FL (Proposed):} Employs the dimension-aware client sampling policy $p = (1-\beta)u + \beta r$ with dependent rounding $S_t = \operatorname{DepRound}(K p)$, using $\beta=0.75$ for IID benchmarks and exploring $\beta \in [0, 0.75]$ on Dirichlet label partitions.
\end{itemize}

\paragraph{Training and evaluation.}
All models use AdamW with learning rate $10^{-5}$, moment coefficients $(0.9,0.999)$, numerical stabilizer $10^{-8}$, and weight decay $5\times10^{-4}$; optimizer states reset at each client participation. FedProx uses a proximal coefficient of $10^{-3}$. The frozen backbone uses BF16, and the query/value LoRA adapters use FP32 with rank 8, scaling parameter 16, and zero dropout. The default configuration has $N=30$, $K=6$, batch size 16, $E=5$, $q=2$, $\mu=10^{-3}$, and 160 rounds.

SST-2 and BoolQ use their full validation sets of 872 and 3,270 examples. SciQ uses the official 1,000-example test set. SQuAD uses greedy generation with a prompt cap of 512 tokens and an output cap of 64 tokens; Table~\ref{tab:squad-v2} reports official v1.1 EM/F1 on all 10,570 validation examples. Intermediate SQuAD curves use a fixed 1,024-example subset. We report the final model throughout.

\paragraph{Seeds and sampling.}
All entries in Table~\ref{tab:llm-v2}, the Qwen/AG News sensitivity study, and the local-training ablations use three seeds, reporting means and sample SDs. All SQuAD results use these three seeds. All SQuAD sampling comparisons use three matched seeds. Within each seed, comparisons preserve the client data, initialization, and resource assignment. The preferred set $R$ is fixed using the smallest actual trainable ZO dimensions divided by $q$, with an independent seeded tie-breaking permutation. HO-FL applies dependent rounding to $Kp$ and directly averages the selected models.

\subsection{IID sampling preference}
\label{app:iid-beta}
Table~\ref{tab:iid-beta-details} compares sampling strengths using seed 42; Table~\ref{tab:squad-beta-details} uses three seeds. Each comparison preserves the configuration within each seed. We use $\beta=0.75$ for the IID main results. Relative to $0.5$, it preserves SST-2 accuracy for both OPT and Qwen, preserves OPT/BoolQ accuracy, and improves Qwen/BoolQ and Qwen/SciQ by 0.55 and 0.30 points. OPT/SciQ decreases by 0.20 points while remaining near chance. On SQuAD, $0.75$ improves both EM and F1 over $0.5$ for all three models in every seed; the mean F1 gains are 1.27, 0.30, and 0.27 points for OPT, Qwen, and SmolLM3. The three-seed Qwen/AG News IID comparison in Table~\ref{tab:beta-v2} also favors $0.75$ over $0.5$ in every seed.
\begin{table}[htbp]
\centering
\caption{\textbf{IID sampling sensitivity.} Final accuracy (\%). All columns use HO-FL.}
\label{tab:iid-beta-details}
\small
\begin{tabular}{llcccc}
\toprule
Model & Task & $\beta=0$ & $0.25$ & $0.5$ & $0.75$ \\
\midrule
OPT & SST-2 & 86.01 & 86.47 & 86.47 & 86.47 \\
 & BoolQ & 57.98 & 57.92 & 58.84 & 58.84 \\
 & SciQ & 24.20 & 23.90 & 24.50 & 24.30 \\
\midrule
Qwen & SST-2 & 91.28 & 91.86 & 92.78 & 92.78 \\
 & BoolQ & 76.54 & 77.74 & 78.29 & 78.84 \\
 & SciQ & 89.90 & 89.80 & 90.00 & 90.30 \\
\bottomrule
\end{tabular}
\end{table}

\begin{table}[htbp]
\centering
\caption{\textbf{SQuAD sampling sensitivity.} Final full-validation EM/F1 (\%), reported as $\text{mean}_{(\text{SD})}$ accross three seeds.}
\label{tab:squad-beta-details}
\small
\begin{tabular}{llccc}
\toprule
Model & Metric & $\beta=0$ & $0.5$ & $0.75$ \\
\midrule
OPT & EM & $37.16_{(0.26)}$ & $40.68_{(0.23)}$ & $41.87_{(0.32)}$ \\
 & F1 & $47.24_{(0.21)}$ & $51.16_{(0.19)}$ & $52.43_{(0.20)}$ \\
\midrule
Qwen & EM & $76.67_{(0.05)}$ & $78.32_{(0.18)}$ & $78.71_{(0.19)}$ \\
 & F1 & $85.34_{(0.03)}$ & $86.61_{(0.11)}$ & $86.92_{(0.10)}$ \\
\midrule
Smol & EM & $77.80_{(0.11)}$ & $79.23_{(0.06)}$ & $79.54_{(0.20)}$ \\
 & F1 & $87.25_{(0.04)}$ & $88.30_{(0.06)}$ & $88.57_{(0.13)}$ \\
\bottomrule
\end{tabular}
\end{table}

Stronger preference increases the participation of clients with larger FO-trained segments. With three equally sized resource tiers, their expected per-round counts are $4/1/1$ at $\beta=0.5$ and $5/0.5/0.5$ at $0.75$, ordered from the smallest to the largest ZO segment. The choice of $0.75$ therefore favors task performance while retaining the per-device memory benefit of hybrid training. The heterogeneity study reports the full $\beta$ grid, and the local-training ablations below keep their fixed $\beta=0.5$ setting.

\FloatBarrier
\subsection{Memory profiles and local participation simulation}
\label{app:memory-v2}
Profiles use Qwen2.5-1.5B, batch size 16, $E=5$, and $q=2$, with FP32 query/value LoRA adapters and a frozen BF16 backbone. We use the maximum peak allocated bytes from three isolated measurements of a complete local participation, including optimizer state initialization. Only sequence length 1024 is displayed; FedProx is omitted from the memory comparison.

The right panel of Figure~\ref{fig:memory-v2} is generated locally, without training a model. It retains the seed-42 client cut assignment and preferred set from the Qwen experiments, with ten clients each at cuts 8/16/24. We set $p_i=(1-\beta)/30+\beta\mathbf1\{i\in R\}/6$ and simulate 160 rounds of fixed-size dependent rounding for each $\beta\in\{0,0.25,0.5,0.75\}$. Each round uses a fresh random processing order and randomness derived from the simulation seed and round index. Client profiles are summed for the six selected participants and averaged over rounds. The corresponding expectation is $6\sum_i p_i m_i(L)$; finite-round averages need not equal this expectation exactly. FedAvg and DeComFL use homogeneous profiles, so their round totals are six times their single-client requirement.

\FloatBarrier
\subsection{Local training ablations}
\label{app:ablations-v2}
We vary one parameter at a time on OPT-125M/SST-2 IID with HO-FL, fixing $\beta=0.5$ and using three random seeds. The default $E=5$, $q=2$, $\mu=10^{-3}$ configuration is shared across the sweeps. Table~\ref{tab:local-sensitivity} reports final full-validation accuracy and task loss.
\begin{table}[htbp]
\centering
\caption{\textbf{Local-training sensitivity on OPT/SST-2 IID.} HO-FL with $\beta=0.5$; mean $\pm$ sample SD across three seeds. All configurations process 76,800 training examples. The default is shared across the three sweeps.}
\label{tab:local-sensitivity}
\small
\begin{tabular}{lrrrrcc}
\toprule
Configuration & $E$ & $q$ & $\mu$ & Rounds & Accuracy (\%) & Validation loss \\
\midrule
Default & 5 & 2 & $10^{-3}$ & 160 & $86.39 \pm 0.07$ & $0.3498 \pm 0.0038$ \\
$E=1$ & 1 & 2 & $10^{-3}$ & 800 & $86.47 \pm 0.41$ & $0.3489 \pm 0.0033$ \\
$E=10$ & 10 & 2 & $10^{-3}$ & 80 & $86.58 \pm 0.30$ & $0.3406 \pm 0.0008$ \\
$q=1$ & 5 & 1 & $10^{-3}$ & 160 & $86.51 \pm 0.18$ & $0.3498 \pm 0.0036$ \\
$q=4$ & 5 & 4 & $10^{-3}$ & 160 & $86.47 \pm 0.00$ & $0.3498 \pm 0.0038$ \\
$q=8$ & 5 & 8 & $10^{-3}$ & 160 & $86.47 \pm 0.11$ & $0.3496 \pm 0.0041$ \\
$\mu=10^{-4}$ & 5 & 2 & $10^{-4}$ & 160 & $86.31 \pm 0.07$ & $0.3496 \pm 0.0036$ \\
$\mu=10^{-2}$ & 5 & 2 & $10^{-2}$ & 160 & $86.35 \pm 0.00$ & $0.3493 \pm 0.0040$ \\
\bottomrule
\end{tabular}
\end{table}

\paragraph{Local steps $E$.}
For $E\in\{1,5,10\}$, we use 800, 160, and 80 rounds, respectively, so each run processes 76,800 training examples. Mean accuracy is 86.47\%, 86.39\%, and 86.58\%. In this range, increasing local work preserves performance while reducing the number of communication rounds; $E=10$ also yields a lower final validation loss.

\paragraph{Direction count $q$.}
For $q\in\{1,2,4,8\}$, mean accuracy is 86.51\%, 86.39\%, 86.47\%, and 86.47\%. Increasing the direction count does not produce a monotonic performance gain in this setting. A small number of directions is sufficient, supporting the default $q=2$ with its lower local computation cost. 

\paragraph{Perturbation radius $\mu$.}
The radii $10^{-4}$, $10^{-3}$, and $10^{-2}$ yield 86.31\%, 86.39\%, and 86.35\% mean accuracy. The small variation across this range supports retaining $\mu=10^{-3}$. Together, these sweeps show that performance is stable across the tested local-training parameters.

\stopcontents[appendices]
\end{document}